\declaretheorem[name=Theorem, numberwithin=section]{theorem}
\declaretheorem[name=Lemma, sibling=theorem]{lemma}
\declaretheorem[name=Corollary, sibling=theorem]{corollary}
\newcommand{\calx}{\mathcal{X}}
\newcommand{\CP}{\mathcal{E}}
\newcommand{\VP}{\mathcal{V}}
\newcommand{\R}{\mathbb{R}}
\newcommand{\E}{\operatorname*{\mathbb{E}}}
\DeclareMathOperator*{\Var}{Var}
\DeclareMathOperator*{\Cov}{Cov}
\DeclareMathOperator*{\argmin}{argmin}
\newcommand{\sdp}{\tilde{s}}
\newcommand{\adultdelta}{\approx 4.7\cdot 10^{-7}}
\newcommand{\MSE}{\mathrm{MSE}}
\newcommand{\COV}{\mathrm{COV}}
\newcommand{\aMV}{\overline{\mathrm{MV}}}
\newcommand{\aSDV}{\overline{\mathrm{SDV}}}
\newcommand{\aRDV}{\overline{\mathrm{RDV}}}
\newcommand{\aDPVAR}{\overline{\mathrm{DPVAR}}}
\newcommand{\aSDB}{\overline{\mathrm{SDB}}}
\newcommand{\aMB}{\overline{\mathrm{MB}}}
\newcommand{\BS}{\mathrm{BS}}
\newcommand{\pg}[1]{\textbf{#1.}}
\begin{document}

%

%

\twocolumn[

\aistatstitle{A Bias--Variance Decomposition for Ensembles over Multiple Synthetic Datasets}

\aistatsauthor{ Ossi Räisä \And Antti Honkela }

\aistatsaddress{ University of Helsinki \\ \texttt{ossi.raisa@helsinki.fi} \And University of Helsinki \\ \texttt{antti.honkela@helsinki.fi} } ]

\begin{abstract}
Recent studies have highlighted the benefits of generating multiple synthetic 
datasets for supervised learning, from increased accuracy to more effective model selection and 
uncertainty estimation. These benefits have clear empirical 
support, but the theoretical understanding of them is currently very light.
We seek to increase the theoretical understanding by deriving bias-variance 
decompositions for several settings of using multiple synthetic datasets, including 
differentially private synthetic data. Our theory 
yields a simple rule of thumb to select the appropriate 
number of synthetic datasets in the case of mean-squared error and Brier score.
We investigate how our theory works in practice with several real datasets,
downstream predictors and error metrics. As our theory predicts, multiple 
synthetic datasets often improve accuracy, while a single large synthetic dataset 
gives at best minimal improvement, showing that our insights are practically relevant.
\end{abstract}

\section{INTRODUCTION}\label{sec:introduction}
Synthetic data has recently attracted significant attention for several applications in machine learning.
The idea is to generate a dataset that preserves the population-level attributes of the 
real data. This makes the synthetic data useful for analysis, while also accomplishing a secondary task,
such as improving model evaluation~\citep{vanbreugelCanYouRely2023}, 
fairness~\citep{vanbreugelDECAFGeneratingFair2021}, data 
augmentation~\citep{antoniouDataAugmentationGenerative2018,dasConditionalSyntheticData2022} or
privacy~\citep{liewDataDistortionProbability1985,
rubin1993statistical}.
For privacy, \emph{differential privacy} (DP)~\citep{dworkCalibratingNoiseSensitivity2006}
is often combined with synthetic data 
generation~\citep{hardtSimplePracticalAlgorithm2012,mckennaWinningNISTContest2021} to achieve 
provable privacy protection, since 
releasing synthetic data without DP can be vulnerable to membership inference 
attacks~\citep{stadlerSyntheticDataAnonymisation2022,vanbreugelMembershipInferenceAttacks2023,meeusAchillesHeelsVulnerable2023}.

Several lines of work have considered generating multiple synthetic datasets from one real 
dataset for various purposes, including statistical 
inference~\citep{rubin1993statistical,raghunathanMultipleImputationStatistical2003,raisaConsistentBayesianInference2023}
and supervised learning~\citep{vanbreugelSyntheticDataReal2023}, the latter of which is our focus.

In supervised learning, \citet{vanbreugelSyntheticDataReal2023}
proposed an ensemble method of generative models.
They propose generating multiple synthetic datasets independently from 
an arbitrary generative model, training a predictive model separately on each 
synthetic dataset, and 
ensembling these models by averaging their predictions. They call this a
\emph{deep generative ensemble} (DGE). We drop the word ``deep'' in this paper and use 
the term \emph{generative ensemble},
as we do not require the generator to be deep in any sense.
The DGE was empirically demonstrated to be beneficial in several ways by 
\citet{vanbreugelSyntheticDataReal2023}, including 
predictive accuracy, model evaluation, model selection, and uncertainty estimation.
Followup work has applied DGEs to improve model evaluation under distribution shifts
and for small subgroups~\citep{vanbreugelCanYouRely2023}.

However, \citet{vanbreugelSyntheticDataReal2023} have very little theoretical analysis of 
the generative ensemble. Their theoretical justification assumes the data is generated 
from a posterior predictive distribution.
This assumption can be justified heuristically for
deep generative models through a Bayesian interpretation of deep
ensembles~\citep{
lakshminarayananSimpleScalablePredictive2017,
wilsonBayesianDeepLearning2020,
wilsonDeepEnsemblesApproximate2021}.
However, this justification only applies to generators with highly multi-modal losses, like neural networks.
It also does not provide any insight on how different choices in the setup, like the 
choice of downstream predictor, affect the performance of the ensemble.

The bias-variance decomposition~\citep{gemanNeuralNetworksBias1992} and its 
generalisations~\citep{uedaGeneralizationErrorEnsemble1996,woodUnifiedTheoryDiversity2023a}
are classical tools that provide insight into supervised learning problems.
The standard bias-variance decomposition from \citet{gemanNeuralNetworksBias1992}
considers predicting $y\in \R$ given features $x\in \calx$, using predictor 
$g(x; D)$ that receives training data $D$.
The mean-squared error (MSE) of $g$ can be decomposed into bias, variance,
and noise terms:
\begin{equation}
    \underbrace{\E_{D,y}[(y - g)^2]}_{\mathrm{MSE}} 
    = \underbrace{(f(x) - \E_D[g])^2}_{\mathrm{Bias}} 
    + \underbrace{\Var_D[g]}_{\mathrm{Variance}} 
    + \underbrace{\Var_{y}[y]}_{\mathrm{Noise}},
    \label{eq:classical-bias-variance-decomposition}
\end{equation}
where we have shortened
$g = g(x; D)$ and
$f(x) = \E_{y}[y]$ is the optimal predictor. $x$ is considered fixed, 
so all the random quantities in 
the decomposition are implicitly conditioned on $x$.
While MSE is typically only used with regression 
problems, the decomposition also applies to the Brier score~\citep{brier1950verification}
in classification, which is simply the MSE of class probabilities.

We seek to provide deeper theoretical understanding of generative ensembles through bias-variance 
decompositions, which provide a more fine-grained view of how different choices in the setup 
affect the end result.

\pg{Contributions}
\begin{enumerate}
    \item We derive a bias-variance decomposition for the MSE or 
    Brier score of 
    generative ensembles under an i.i.d.\ assumption in 
    Theorem~\ref{thm:mse-synthetic-data-decomposition}.
    This decomposition is simply a sum of interpretable terms, which makes it possible to predict 
    how various choices affect the error. 
    \item We derive several practical considerations from our decomposition, including 
    a simple rule of thumb
    to select the number of synthetic datasets in Section~\ref{sec:estimating-effect-multiple-syn-datasets}.
    In summary, 2 synthetic datasets give 50\% of the potential benefit from multiple synthetic 
    datasets, 10 give 90\% of the benefit and 100 give 99\% of the benefit.
    This also applies to bagging ensembles like random forests, which is likely to be of 
    independent interest.
    The benefit is a result of reduced variance, so the theory
    predicts high-variance models to receive the largest 
    benefit.
    \item We generalise the decomposition of Theorem~\ref{thm:mse-synthetic-data-decomposition}
    to differentially private (DP) generation algorithms that do not split their privacy budget 
    between the multiple synthetic datasets\footnote{
        Theorem~\ref{thm:mse-synthetic-data-decomposition} applies to DP algorithms that 
        split the privacy budget, but it is not clear if multiple synthetic datasets are 
        beneficial with these algorithms, as splitting the privacy budget between more synthetic
        datasets means that each one requires adding more noise, degrading the quality of 
        the synthetic data.
    } in Theorem~\ref{thm:mse-dp-synthetic-data-decomposition}, to non-i.i.d.\ synthetic 
    data in Appendix~\ref{sec:mse-decomposition-non-iid}, and
    to Bregman divergences in Appendix~\ref{app:bregman-divergence-decomposition}.
    \item We evaluate the performance of a generative ensemble on several 
    datasets, downstream prediction algorithms, and error metrics in Section~\ref{sec:experiments}. The 
    results show that out theory applies in practice:
    multiple synthetic datasets generally decrease all of the error metrics, 
    and our rule of thumb makes accurate predictions when the 
    error can be accurately estimated. In contrast, a single large synthetic 
    dataset provides small benefits at best, and can even increase error.
\end{enumerate}

\subsection{Related Work}
Ensembling generative models has been independently proposed several 
times~\citep{
wangEnsemblesGenerativeAdversarial2016,
choiWAICWhyGenerative2019,
luziEnsemblesGenerativeAdversarial2020,
chenSelfSupervisedBlindImage2023,
vanbreugelSyntheticDataReal2023}.
The inspiration of our work comes from \citet{vanbreugelSyntheticDataReal2023}, who proposed 
ensembling predictive models over multiple synthetic datasets, and empirically studied how 
this improves several aspects of performance in classification.

Generating multiple synthetic datasets has also been proposed with statistical inference in mind,
for both frequentist~\citep{
rubin1993statistical,
raghunathanMultipleImputationStatistical2003,
raisaNoiseawareStatisticalInference2023},
and recently Bayesian~\citep{raisaConsistentBayesianInference2023} inference.
These works use the multiple synthetic datasets to correct statistical inferences
for the extra uncertainty from synthetic data generation.

The bias-variance decomposition was originally derived by \citet{gemanNeuralNetworksBias1992}
for standard regressors using MSE as the loss. \citet{jamesVarianceBiasGeneral2003} generalised 
the decomposition to symmetric losses, and 
\citet{pfauGeneralizedBiasvarianceDecomposition2013} 
generalised it to Bregman divergences. 

\citet{uedaGeneralizationErrorEnsemble1996} were the first to 
study the MSE bias--variance decomposition with ensembles, and 
\citet{guptaEnsemblesClassifiersBiasVariance2022,woodUnifiedTheoryDiversity2023a}
later extended the ensemble decomposition to other losses. All of these also apply to generative 
ensembles, but they only provide limited insight for them, as they do not separate the 
synthetic data generation-related terms from the downstream-related terms.

\section{BIAS-VARIANCE DECOMPOSITIONS FOR GENERATIVE ENSEMBLES}\label{sec:mse-decomposition}
In this section, we make our main theoretical contributions. We start by defining our 
setting in Section~\ref{sec:problem-setting}. We then derive a bias-variance decomposition for 
generative ensembles in Section~\ref{sec:mse-decomposition-ge}, and a simple 
rule of thumb that can be used to select the number of synthetic datasets in 
Section~\ref{sec:estimating-effect-multiple-syn-datasets}. The first 
decomposition does not apply to some differentially private synthetic data 
generation methods, so we generalise the decomposition to apply to those in 
Section~\ref{sec:mse-decomposition-dp}. We also present decompositions that 
apply non-i.i.d.\  settings and to Bregman divergences
in Appendices~\ref{sec:mse-decomposition-non-iid} and
\ref{app:bregman-divergence-decomposition},
but they are not as informative as the others.

\subsection{Problem Setting}\label{sec:problem-setting}
We consider multiple synthetic datasets $D_s^{1:m}$, each of which is used to train 
an instance of a predictive model $g$. These models are combined into an ensemble $\hat{g}$
by averaging, so 
\begin{equation}
    \hat{g}(x; D_s^{1:m}) = \frac{1}{m}\sum_{i=1}^m g(x; D_s^i).
\end{equation}
We allow $g$ to be random, so $g$ can for example internally select among several predictive 
models to use randomly.

We assume that each synthetic dataset $D_s^i$ is generated
from a generator with parameters $\theta_i$, and that
each generator is run with an independent random seed,
so generations are independent given the parameters
$\theta_{1:m}$. We also assume that each generator is 
trained on the real data $D_r$ with an independent 
random seed, and there are no other dependencies between
the training processes besides the real data, so
the $\theta_i$ are i.i.d. given $D_r$.
This is how synthetic datasets are sampled in DGE~\citep{vanbreugelSyntheticDataReal2023}, where $\theta_i$ 
are the parameters of the generative neural network from independent training runs. This also encompasses bootstrapping, where 
$\theta_i$ would be the real dataset\footnote{The $\theta_i$ random variables are i.i.d.\ if they 
are deterministically equal.}, and $p(D_s | \theta_i)$ is the bootstrapping.

We will also consider a more general setting that applies to some differentially private synthetic 
data generators that do not fit into this setting in Section~\ref{sec:mse-decomposition-dp}.
In Appendix~\ref{sec:mse-decomposition-non-iid}, we consider settings without the i.i.d.\ assumptions.

\subsection{Mean-squared Error Decomposition}\label{sec:mse-decomposition-ge}

Next, we present our main theorem.
With the conditional i.i.d. assumptions detailed in 
Section~\ref{sec:problem-setting}, all of the 
$\theta_i | D_r$ distributions are identical and 
independent, so we use $\theta$ as a shorthand for a random 
variable with this distribution in this section. 
Similarly, the $D_s^i | \theta_i$ distribution only 
depends on $\theta_i$, but not $i$ with these 
assumptions, so we use $D_s | \theta$ as a shorthand.

\begin{restatable}{theorem}{theoremmsesyntheticdatadecomposition}\label{thm:mse-synthetic-data-decomposition}
    Let the parameters for $m$ generators $\theta_i \sim p(\theta | D_r)$, $i=1, \dots, m$,
    be i.i.d. Let
    the synthetic datasets be $D_s^{i} \sim p(D_s | \theta_i)$ independently, and
    let $\hat{g}(x; D_s^{1:m}) = \frac{1}{m}\sum_{i=1}^m g(x; D_s^i)$. Then
    the mean-squared error in predicting $y$ from $x$ decomposes
    into six terms: model variance (MV), synthetic data variance (SDV), real data variance (RDV),
    synthetic data bias (SDB), model bias (MB), and noise $\Var_{y}[y]$:
    \begin{equation}
        \mathrm{MSE} = \frac{1}{m}\mathrm{MV} + \frac{1}{m}\mathrm{SDV} 
        + \mathrm{RDV}+ (\mathrm{SDB} + \mathrm{MB})^2 + \Var_y[y],
        \label{eq:mse-decomposition-mean-model}
    \end{equation}
    where
    \begin{equation}
        \begin{split}
            \mathrm{MSE} &= \E_{y, D_r, D_s^{1:m}}[(y - \hat{g}(x; D_s^{1:m}))^2] \\
            \mathrm{MV} &= \E_{D_r, \theta} \Var_{D_s|\theta}[g(x; D_s)] \\
            \mathrm{SDV} &= \E_{D_r}\Var_{\theta | D_r}\E_{D_s | \theta}[g(x; D_s)] \\
            \mathrm{RDV} &= \Var_{D_r}\E_{D_s|D_r}[g(x; D_s)] \\
            \mathrm{SDB} &= \E_{D_r}[f(x) - \E_{\theta | D_r}[f_\theta(x)]] \\
            \mathrm{MB} &= \E_{D_r, \theta}[f_\theta(x) - \E_{D_s|\theta}[g(x; D_s)]].
        \end{split}
    \end{equation}
    $f(x) = \E_{y}[y]$ is the optimal predictor for real data, 
    $\theta \sim p(\theta|D_r)$ is a single sample from the identical 
    distribution of the generator parameters $\theta_i$, 
    $D_s\sim p(D_s | \theta)$ is 
    a single sample of the synthetic data generating process given 
    $\theta$, and $f_\theta$ is the optimal predictor for the synthetic 
    data generating process with parameters $\theta$. 
    All random quantities are implicitly conditioned on $x$.
\end{restatable}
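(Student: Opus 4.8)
The plan is to build the decomposition out of two classical moves—the noise/reducible-error split and nested applications of the law of total variance—while carefully tracking which sources of randomness the ensemble average attenuates. First I would peel off the noise term. Since the test label $y$ is independent of the ensemble $\hat{g}$ given $x$, writing $y - \hat{g} = (y - f(x)) + (f(x) - \hat{g})$ and expanding the square makes the cross term vanish, because $\E_y[y - f(x)] = 0$. This leaves
\[
    \mathrm{MSE} = \Var_y[y] + \E_{D_r, D_s^{1:m}}[(f(x) - \hat{g}(x; D_s^{1:m}))^2],
\]
so it suffices to decompose the reducible error on the right.

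Next I would split the reducible error in the usual way, $\E[(f(x)-\hat{g})^2] = (f(x) - \E[\hat{g}])^2 + \Var[\hat{g}]$. Because the $\theta_i$ are i.i.d.\ and each $D_s^i$ is drawn identically, every member $g(x; D_s^i)$ has the same mean, so $\E[\hat{g}] = \E_{D_r,\theta,D_s}[g(x; D_s)]$. To recover $\SDB$ and $\MB$ I would insert the synthetic-optimal predictor $f_\theta(x)$ into the total bias, writing
\[
    f(x) - \E[\hat{g}] = \big(f(x) - \E_{D_r,\theta}[f_\theta(x)]\big) + \big(\E_{D_r,\theta}[f_\theta(x)] - \E_{D_r,\theta}\E_{D_s|\theta}[g(x; D_s)]\big) = \SDB + \MB,
\]
which produces the $(\SDB + \MB)^2$ term (here $f(x)$ is constant in $D_r$, so $\E_{D_r}[f(x)] = f(x)$).

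The heart of the argument, and the step I expect to be the main obstacle, is the variance term $\Var[\hat{g}]$, where the nested sampling structure must be unwound in the right order. I would first condition on the real data: by the law of total variance, $\Var[\hat{g}] = \Var_{D_r}\E[\hat{g}\mid D_r] + \E_{D_r}\Var[\hat{g}\mid D_r]$. The outer piece is exactly $\RDV$, since $\E[\hat{g}\mid D_r] = \E_{D_s|D_r}[g(x; D_s)]$, and it is \emph{not} divided by $m$ because the real data is shared across all members. For the inner piece, the key observation is that given $D_r$ the pairs $(\theta_i, D_s^i)$ are i.i.d.\ across $i$, so the members are conditionally i.i.d.\ and $\Var[\hat{g}\mid D_r] = \frac{1}{m}\Var[g(x; D_s)\mid D_r]$. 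A second application of the law of total variance, now conditioning on $\theta$ within $D_r$, splits $\Var[g(x; D_s)\mid D_r]$ into $\E_{\theta|D_r}\Var_{D_s|\theta}[g(x; D_s)]$ and $\Var_{\theta|D_r}\E_{D_s|\theta}[g(x; D_s)]$; taking $\E_{D_r}$ turns these into $\MV$ and $\SDV$. Collecting terms gives $\Var[\hat{g}] = \frac{1}{m}\MV + \frac{1}{m}\SDV + \RDV$, and combining with the noise and bias pieces yields the claimed identity. The only real subtlety is the bookkeeping: landing the factor $1/m$ onto exactly $\MV$ and $\SDV$—the variation strictly downstream of the shared real data—while leaving $\RDV$ untouched depends on applying the total-variance law in precisely this nesting and on the conditional independence of the members given $D_r$.
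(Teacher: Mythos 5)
Your proposal is correct and follows essentially the same route as the paper's proof: the classical bias--variance--noise split (which you rederive rather than cite), the law of total variance applied first conditioning on $D_r$ and then on $\theta$, conditional i.i.d.-ness of the ensemble members given $D_r$ to extract the $\frac{1}{m}$ factor, and insertion of $f_\theta$ to split the bias into $\mathrm{SDB} + \mathrm{MB}$. The bookkeeping you flag as the main subtlety is handled identically in the paper, so there is nothing to add.
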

The proofs of all theorems are in Appendix~\ref{sec:missing-proofs}. 

To intuitively explain what each term measures, we 
split the whole generative ensemble pipeline into three 
steps:
\begin{enumerate}
    \item Sample real data $D_r$,
    \item given $D_r$, train generative model, resulting
    in $\theta$,
    \item given $\theta$, sample synthetic data $D_s$,
    train downstream model $g$ and make a prediction on $x$.
\end{enumerate}
In the full generative ensemble, steps 2 and 3 are repeated $m$ times.

MV measures the variance of step 3, averaged over 
the randomness in steps 1 and 2.
SDV measures the variance of the average prediction
from step 3 over the randomness in step 2, and also averages
over the randomness in step 1. RDV measures the variance
of the average prediction from the combination of steps 
2 and 3, over the randomness in step 1.

$f_\theta$ represents the optimal predictor if the 
synthetic data generation in step 3 were the real data
generating process. SDB measures how much the average $f(\theta)$, over the randomness in step 2,
differs from the actual optimal predictor $f$ on
the real data, and averages the difference over 
the randomness in step 1. MB measures how much 
the downstream model's average predictions, over the 
randomness in step 3, differ from $f_\theta$, and
averages the difference over the randomness in steps 1 and
2. $\Var_y[y]$ is the noise term also present in
the classical bias-variance decomposition in 
\eqref{eq:classical-bias-variance-decomposition}, which 
represents inherent randomness in the prediction problem
that cannot be removed by any predictor.

Note that the MV, SDV, RDV, SDB, MB and $\Var_y[y]$ terms 
do not depend
on $m$. Changing $m$ only affects the impact of MV and SDV
on MSE, as seen in \eqref{eq:mse-decomposition-mean-model}.

\pg{Multidimensional $y$}
For a multidimensional $y\in \R^d$, we have 
\begin{equation}
    \E_{y,D_r,D_s^{1:m}}\big[||y - \hat{g}||_2^2\big] = \sum_{j=1}^d \E_{y,D_r,D_s^{1:m}}\big[(y_j - \hat{g_j})^2\big],
\end{equation}
where we have shortened $\hat{g} = \hat{g}(x; D_s^{1:m})$ and
$y_j$ and $\hat{g}_j$ index over the dimensions. Since the right hand side is 
a sum of one-dimensional MSEs, our theory can also be applied to multi-dimensional $y$.

\pg{Relation to Brier Score}
The Brier score~\citep{brier1950verification} can be defined in two ways for binary 
classification. The first way to is pick a class, here class 0, and define
\begin{equation}
    \BS_1 = \E_{y, D}[(y_0 - g_0(x; D))^2],
\end{equation}
where $y_0$ is the indicator for the true class being 0, 
and $g_0(x; D)$ is the predicted probability of class 0. The second way is to define
\begin{equation}
    \BS_2 = \E_{y, D}[||y - g(x; D)||_2^2],
\end{equation}
where $y$ is a one-hot-encoding of the true class, and $g(x; D)$ is the vector of predicted 
probabilities. The second definition also extends to problems with more than two classes.
For binary classification, $\BS_2 = 2\cdot \BS_1$ since probabilities sum to 1,  
so it does not matter which class is picked for the first definition.

The first definition is an MSE, and the second is a sum of one-dimensional MSEs, so our 
theory applies to both, including multiclass problems through the extension to 
multidimensional $y$.

\paragraph{Practical Considerations}
We can derive the following practical considerations from 
Theorem~\ref{thm:mse-synthetic-data-decomposition}:
\begin{enumerate}
    \item There is a simple rule-of-thumb on how many synthetic datasets
    are beneficial: $m$ synthetic datasets give a $1 - \frac{1}{m}$ fraction
    of the possible benefit from multiple synthetic datasets.  For example,
    $m = 2$ gives 50\% of the benefit, $m = 10$ gives 90\% and $m = 100$ gives 
    99\%. This rule-of-thumb can be used to predict the MSE 
    with many synthetic datasets from the results on just two synthetic datasets.
    More details in Section~\ref{sec:estimating-effect-multiple-syn-datasets}.
    \item Increasing the number of synthetic datasets $m$ reduces the 
    impact of MV and SDV. This means that high-variance models, like 
    interpolating decision trees or 1-nearest neighbours, benefit the 
    most from multiple synthetic datasets. The size of this benefit 
    can be empirically estimated using the prediction rule of 
    Section~\ref{sec:estimating-effect-multiple-syn-datasets}.
    \item The quality metrics for a synthetic data generator should include metrics that 
    compare the distribution of the synthetic\footnote{Specifically, the conditional distribution of synthetic data given real data.} 
    data to the real distribution, instead of just 
    comparing a synthetic dataset to the real dataset. 
    Examples of the latter are all metrics that return the optimal
    value for a generator that just returns the real data, like
    comparisons between the marginals of the real and synthetic 
    datasets, comparisons of downstream ML performance, and metrics like 
    density/coverage~\citep{naeemReliableFidelityDiversity2020}.
    One example of the former is the authenticity metric
    of~\citep{alaaHowFaithfulYour2022}, though it only 
    attempts to capture how well the generator generalises
    instead of copying the real data.
\end{enumerate}

The last point can be derived by considering two extreme scenarios:
\begin{enumerate}
    \item Generator is fitted perfectly, and generates from the real data generating distribution.
    Now
    \begin{equation*}
        \begin{split}
            \mathrm{MV} &= \Var_{D_r}[g(x; D_r)], \,\,\,\,
            \mathrm{MB} = \E_{D_r}[f(x) - g(x; D_r)] \\
            \mathrm{SDV} &= \mathrm{RDV} = \mathrm{SDB} = 0.
        \end{split}
    \end{equation*}
    With one synthetic dataset, the result is the standard bias-variance trade-off. With multiple
    synthetic datasets, the impact of MV can be reduced, reducing the error compared to just using 
    real data.
    \item Return the real data: $D_s | D_r = D_r$ deterministically.\footnote{
        Theorem~\ref{thm:mse-synthetic-data-decomposition} applies 
        in this scenario even with multiple synthetic datasets, as the deterministically identical 
        synthetic datasets are independent as random variables.
    }
    \begin{equation*}
        \begin{split}
            \mathrm{RDV} &= \Var_{D_r}[g(x; D_r)], \,\,\,
            \mathrm{MB} = \E_{D_r}[f(x) - g(x; D_r)] \\
            \mathrm{MV} &= \mathrm{SDV} = \mathrm{SDB} = 0.
        \end{split}
    \end{equation*}
    The result is the standard bias-variance trade-off. The number of synthetic datasets does not matter,
    as all of them would be the same anyway. 
\end{enumerate}
While both of these scenarios are unrealistic, they may be approximated by a well-performing algorithm. The generator from Scenario 2 is the optimal generator for metrics that compare the 
synthetic dataset to the real dataset, while the generator of Scenario 1 is optimal for metrics
that compare the synthetic data and real data distributions. Multiple synthetic datasets are 
only beneficial in Scenario 1, which means that metrics comparing the distributions are 
more meaningful when multiple synthetic datasets are considered.

\subsection{Estimating the effect of Multiple Synthetic Datasets}\label{sec:estimating-effect-multiple-syn-datasets}
Next, we consider estimating the variance terms MV and SDV in 
Theorem~\ref{thm:mse-synthetic-data-decomposition} from a small number of synthetic 
datasets and a test set. These estimates can then be used to asses if more synthetic 
datasets should be generated, and how many more are useful.

We can simplify Theorem~\ref{thm:mse-synthetic-data-decomposition} to
\begin{equation}
    \mathrm{MSE} = \frac{1}{m}\mathrm{MV} + \frac{1}{m}\mathrm{SDV} + \mathrm{Others},
    \label{eq:simple-mse-decomposition}
\end{equation}
where Others does not depend on the number of synthetic datasets $m$. 
The usefulness of more synthetic datasets clearly depends on the magnitude of 
$\mathrm{MV} + \mathrm{SDV}$ compared to Others.

Since MSE depends on $m$, we can add a subscript to denote the $m$ in question:
$\mathrm{MSE}_m$. 
Now \eqref{eq:simple-mse-decomposition} leads to the following corollary.
\begin{corollary}
    In the setting of Theorem~\ref{thm:mse-synthetic-data-decomposition}, we have
    \begin{equation}
        \mathrm{MSE}_{m} 
        = \mathrm{MSE}_{1} - \left(1 - \frac{1}{m}\right)(\mathrm{MV} + \mathrm{SDV}).
        \label{eq:mse-m-estimator}
    \end{equation}
\end{corollary}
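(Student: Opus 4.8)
The plan is to treat this as a direct algebraic consequence of the simplified decomposition in \eqref{eq:simple-mse-decomposition}. The key observation I would make first is that the quantities $\mathrm{MV}$, $\mathrm{SDV}$, and everything collected into $\mathrm{Others}$ (namely $\mathrm{RDV} + (\mathrm{SDB} + \mathrm{MB})^2 + \Var_y[y]$) are defined in Theorem~\ref{thm:mse-synthetic-data-decomposition} as single-sample expectations that make no reference to $m$. In particular, $\mathrm{MV} = \E_{D_r, \theta}\Var_{D_s|\theta}[g(x; D_s)]$ and $\mathrm{SDV} = \E_{D_r}\Var_{\theta|D_r}\E_{D_s|\theta}[g(x; D_s)]$ are both properties of a single draw of the generating process, so they are constants with respect to the number of synthetic datasets. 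This is the one point that genuinely needs to be checked, and it is what licenses comparing the $m$-dataset and $1$-dataset cases using the \emph{same} $\mathrm{MV}$ and $\mathrm{SDV}$ values.

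Given that, I would proceed in two short steps. First I would specialise \eqref{eq:simple-mse-decomposition} to $m = 1$, which gives
\begin{equation}
    \mathrm{MSE}_1 = \mathrm{MV} + \mathrm{SDV} + \mathrm{Others},
\end{equation}
and then solve for the $m$-independent remainder, $\mathrm{Others} = \mathrm{MSE}_1 - \mathrm{MV} - \mathrm{SDV}$. Second I would substitute this expression for $\mathrm{Others}$ back into \eqref{eq:simple-mse-decomposition} evaluated at general $m$:
\begin{equation}
    \mathrm{MSE}_m = \frac{1}{m}(\mathrm{MV} + \mathrm{SDV}) + \mathrm{MSE}_1 - (\mathrm{MV} + \mathrm{SDV})
    = \mathrm{MSE}_1 - \left(1 - \frac{1}{m}\right)(\mathrm{MV} + \mathrm{SDV}),
\end{equation}
which is exactly \eqref{eq:mse-m-estimator}.

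There is no real obstacle here beyond the bookkeeping point already flagged: since $\mathrm{MV}$, $\mathrm{SDV}$, and $\mathrm{Others}$ are all independent of $m$, the decomposition is affine in $1/m$, and the corollary is just the statement that an affine function is determined by its value at one point ($m=1$) together with its slope ($\mathrm{MV}+\mathrm{SDV}$). The proof is therefore essentially a two-line calculation, and the only care required is to confirm that the terms being held fixed across $m$ are indeed the same objects in both the $m$-dataset and single-dataset expressions, which follows immediately from their definitions in Theorem~\ref{thm:mse-synthetic-data-decomposition}.
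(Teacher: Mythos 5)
Your proposal is correct and follows essentially the same route as the paper, which proves the corollary by expanding $\mathrm{MSE}_1$ and $\mathrm{MSE}_m$ via the simplified decomposition \eqref{eq:simple-mse-decomposition} and subtracting. Your extra remark that $\mathrm{MV}$, $\mathrm{SDV}$, and $\mathrm{Others}$ are $m$-independent is exactly the implicit justification the paper relies on, so the two arguments coincide.
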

\begin{proof}
    The claim follows by expanding $\MSE_1$ and $\MSE_m$ with \eqref{eq:simple-mse-decomposition}.
\end{proof}
If we have two synthetic datasets, we can estimate 
$\mathrm{MV} + \mathrm{SDV} = 2(\mathrm{MSE}_{1} - \mathrm{MSE}_{2})$, which gives the 
estimator
\begin{equation}
    \mathrm{MSE}_{m} 
    = \mathrm{MSE}_{1} - 2\left(1 - \frac{1}{m}\right)(\mathrm{MSE}_{1} - \mathrm{MSE}_{2}).
    \label{eq:mse-m-estimator-two-syn-datasets}
\end{equation}

If we have more than two synthetic datasets, we can set $x_m = 1 - \frac{1}{m}$ and 
$y_m = \mathrm{MSE}_m$ in \eqref{eq:mse-m-estimator}:
\begin{equation}
    y_m = y_1 + x_m(\mathrm{MV} + \mathrm{SDV}),
    \label{eq:mv+sdv-estimator-linear-regression}
\end{equation}
so we can estimate $\mathrm{MV} + \mathrm{SDV}$ from linear regression on $(x_m, y_m)$.
However, this will likely have a limited effect on the accuracy of the $\mathrm{MSE}_m$ 
estimates, as it will not
reduce the noise in estimating $\mathrm{MSE}_{1}$, which has a significant effect in
\eqref{eq:mse-m-estimator}.

From \eqref{eq:mse-m-estimator}, we see that $\mathrm{MV} + \mathrm{SDV}$ is the 
maximum reduction in MSE that can be obtained from multiple synthetic datasets.
This means that $2(\mathrm{MSE}_1 - \mathrm{MSE}_2)$, or the linear regression estimate
from \eqref{eq:mv+sdv-estimator-linear-regression},
can be used as a diagnostic to check whether generating multiple synthetic datasets 
is worthwhile.

All terms in \eqref{eq:simple-mse-decomposition}-\eqref{eq:mv+sdv-estimator-linear-regression}
depend on the target features $x$. We would like our estimates to be useful 
for typical $x$, so we will actually want to estimate 
$\E_x(\mathrm{MSE}_m)$. Equations 
\eqref{eq:simple-mse-decomposition}-\eqref{eq:mv+sdv-estimator-linear-regression}
remain valid if we take the expectation over $x$, so we can simply replace 
the MSE terms with their estimates that are computed from a test set. 

Computing the estimates in practice will require that the privacy risk of publishing the 
test MSE is considered acceptable. The MSE for the estimate can also be computed 
from a separate validation set to avoid overfitting to the test set, but the risk 
of overfitting is small in this case, as $m$ has a monotonic effect on the MSE.
Both of these caveats can be avoided by choosing $m$ using the rule of thumb that 
$m$ synthetic datasets give a $1 - \frac{1}{m}$ of the potential benefit of multiple 
synthetic datasets, which is a consequence of \eqref{eq:mse-m-estimator}.

Note that this MSE estimator can be applied to bagging ensembles~\citep{breimanBaggingPredictors1996} 
like random forests~\citep{breimanRandomForests2001}, since bootstrapping is a very simple form 
of synthetic data generation.

\subsection{Differentially Private Synthetic Data Generators}\label{sec:mse-decomposition-dp}
Generating and releasing multiple synthetic datasets could increase the associated 
disclosure risk. One solution to this is \emph{differential privacy} 
(DP)~\citep{dworkCalibratingNoiseSensitivity2006,dworkAlgorithmicFoundationsDifferential2014},
which is a property of an algorithm that formally bounds the privacy leakage that can 
result from releasing the output of that algorithm. DP gives a quantitative upper 
bound on the privacy leakage, which is known as the privacy budget.
Achieving DP requires adding extra noise to some point in the algorithm, lowering the 
utility of the result.

If the synthetic data is to be generated with DP, there are two possible ways 
to handle the required noise addition. The first is splitting the privacy budget across the $m$
synthetic datasets, and run the DP generation algorithm separately $m$ times. 
Theorem~\ref{thm:mse-synthetic-data-decomposition} applies in this setting. However, it is not 
clear if multiple synthetic datasets are beneficial in this case, as splitting the privacy 
budget requires adding more noise to each synthetic dataset. This also means that the rule of 
thumb from Section~\ref{sec:estimating-effect-multiple-syn-datasets} will not apply.
Most DP synthetic data generation algorithms would fall into this
category~\citep{
aydoreDifferentiallyPrivateQuery2021,
chenGSWGANGradientsanitizedApproach2020,
harderDPMERFDifferentiallyPrivate2021,
hardtSimplePracticalAlgorithm2012,
liuIterativeMethodsPrivate2021,
mckennaGraphicalmodelBasedEstimation2019,
mckennaWinningNISTContest2021} if used to generate multiple synthetic datasets.

The second possibility is generating all synthetic datasets based on a single application of a 
DP mechanism. Specifically, a noisy summary $\sdp$ of the real data is released under DP.
The parameters $\theta_{1:m}$ are then sampled i.i.d.\ conditional on $\sdp$, and the 
synthetic datasets are sampled conditionally on the $\theta_{1:m}$. This setting includes 
algorithms that release a posterior distribution under DP, and use the posterior to 
generate synthetic data, like the NAPSU-MQ algorithm~\citep{raisaNoiseawareStatisticalInference2023} and 
DP variational inference 
(DPVI)~\citep{jalkoDifferentiallyPrivateVariational2017,jalkoPrivacypreservingDataSharing2021}.\footnote{
In DPVI, $\sdp$ would be the variational approximation to the posterior.
}

The synthetic datasets are not i.i.d.\ given the real data in the second setting, so the setting
described in Section~\ref{sec:problem-setting} and assumed in 
Theorem~\ref{thm:mse-synthetic-data-decomposition} does not apply. However, the synthetic 
datasets are i.i.d.\ given the noisy summary $\sdp$, so we obtain a similar decomposition as before.
\begin{restatable}{theorem}{theoremmsedpsyntheticdatadecomposition}\label{thm:mse-dp-synthetic-data-decomposition}
    Let the parameters for $m$ generators $\theta_i \sim p(\theta | \sdp)$, $i=1,\dotsc,m$,
    be i.i.d.\ given a DP summary $\sdp$. Let
    the synthetic datasets be $D_s^{i} \sim p(D_s | \theta_i)$ independently, and
    let $\hat{g}(x; D_s^{1:m}) = \frac{1}{m}\sum_{i=1}^m g(x; D_s^i)$. Then
    \begin{equation}
        \begin{split}
        \mathrm{MSE} &= \frac{1}{m}\mathrm{MV} + \frac{1}{m}\mathrm{SDV} 
        + \mathrm{RDV} + \mathrm{DPVAR} 
        \\&+ (\mathrm{SDB} + \mathrm{MB})^2 + \Var_{y}[y],
        \end{split}
        \label{eq:mse-decomposition-latent-var-mean-model}
    \end{equation}
    where
    \begin{equation}
        \begin{split}
            \mathrm{MSE} &= \E_{y, D_r, \sdp, D_s^{1:m}}[(y - \hat{g}(x; D_s^{1:m}))^2] \\
            \mathrm{MV} &= \E_{D_r, \sdp, \theta} \Var_{D_s|\theta}[g(x; D_s)] \\
            \mathrm{SDV} &= \E_{D_r, \sdp}\Var_{\theta | \sdp}\E_{D_s | \theta}[g(x; D_s)] \\
            \mathrm{RDV} &= \Var_{D_r}\E_{D_s|D_r}[g(x; D_s)]
        \end{split}
    \end{equation}
    \begin{equation}
        \begin{split}
            \mathrm{DPVAR} &= \E_{D_r}\Var_{\sdp|D_r}\E_{D_s|\sdp}[g(x; D_s)] \\
            \mathrm{SDB} &= \E_{D_r, \sdp}[f(x) - \E_{\theta | \sdp}[f_\theta(x)]] \\
            \mathrm{MB} &= \E_{D_r, \sdp, \theta}[f_\theta(x) - \E_{D_s|\theta}[g(x; D_s)]].
        \end{split}
    \end{equation}
    $f(x) = \E_{y}[y]$ is the optimal predictor for real data, 
    $\theta \sim p(\theta|D_r)$ is a single sample from the distribution of the 
    generator parameters, $D_s\sim p(D_s | \theta)$ is 
    a single sample of the synthetic data generating process, and
    $f_\theta$ is the optimal predictor for the synthetic data generating process
    with parameters $\theta$. All random quantities are implicitly conditioned on $x$.
\end{restatable}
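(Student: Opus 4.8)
The plan is to follow the standard route for bias--variance decompositions—peel off the noise, then the bias, then the variance—and to absorb the extra hierarchical level introduced by $\sdp$ through an iterated law of total variance. First I would condition on everything except $y$: since $f(x) = \E_y[y]$, for any fixed prediction $\hat{g}$ we have $\E_y[(y - \hat{g})^2] = \Var_y[y] + (f(x) - \hat{g})^2$, so taking the remaining expectation gives $\mathrm{MSE} = \Var_y[y] + \E_{D_r, \sdp, D_s^{1:m}}[(f(x) - \hat{g})^2]$. Writing $\mu = \E_{D_r, \sdp, D_s^{1:m}}[\hat{g}]$ for the grand mean prediction, the usual identity yields $\E[(f(x) - \hat{g})^2] = (f(x) - \mu)^2 + \Var[\hat{g}]$. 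This reduces the theorem to identifying $(f(x)-\mu)^2$ with $(\mathrm{SDB} + \mathrm{MB})^2$ and $\Var[\hat{g}]$ with $\frac{1}{m}\mathrm{MV} + \frac{1}{m}\mathrm{SDV} + \mathrm{RDV} + \mathrm{DPVAR}$.

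For the bias, I would add the definitions of $\mathrm{SDB}$ and $\mathrm{MB}$ and observe a telescoping cancellation: the term $\E_{D_r,\sdp}\E_{\theta|\sdp}[f_\theta(x)]$ inside $\mathrm{SDB}$ exactly cancels the term $\E_{D_r,\sdp,\theta}[f_\theta(x)]$ inside $\mathrm{MB}$, leaving $\mathrm{SDB} + \mathrm{MB} = f(x) - \E_{D_r,\sdp,\theta,D_s}[g(x;D_s)] = f(x) - \mu$, using that $f(x)$ is constant given $x$. Squaring gives the claimed bias term, and I note that $f_\theta$ never needs an explicit characterization, as it appears only in a cancelling pair.

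The heart of the argument is decomposing $\Var[\hat{g}]$ over the four-level chain $D_r \to \sdp \to \theta_{1:m} \to D_s^{1:m}$. I would apply the telescoping law of total variance along the filtration $\mathcal{F}_1 = \sigma(D_r) \subseteq \mathcal{F}_2 = \sigma(D_r,\sdp) \subseteq \mathcal{F}_3 = \sigma(D_r,\sdp,\theta_{1:m}) \subseteq \mathcal{F}_4 = \sigma(D_r,\sdp,\theta_{1:m},D_s^{1:m})$, expressing $\Var[\hat{g}]$ as a sum of four expected conditional variances. Writing $h(\theta) = \E_{D_s|\theta}[g(x;D_s)]$, the innermost level uses conditional independence of the $D_s^i$ given $\theta_{1:m}$, so the variance of the average contracts by $1/m$ and produces $\frac{1}{m}\mathrm{MV}$; the next level uses that the $\theta_i$ are i.i.d.\ given $\sdp$, so $\Var[\frac{1}{m}\sum_i h(\theta_i) \mid \sdp] = \frac{1}{m}\Var_{\theta|\sdp}[h(\theta)]$, producing $\frac{1}{m}\mathrm{SDV}$; the $\sdp$-level contributes $\E_{D_r}\Var_{\sdp|D_r}\E_{D_s|\sdp}[g] = \mathrm{DPVAR}$ after marginalizing $\theta$ (using the Markov property $\theta \perp D_r \mid \sdp$ built into the sampling scheme); and the top level contributes $\Var_{D_r}\E_{D_s|D_r}[g] = \mathrm{RDV}$ after marginalizing $\sdp$ and $\theta$.

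The main obstacle is the bookkeeping at the two i.i.d.\ levels: one must track which conditional variances are \emph{shared} across all ensemble members—the $D_r$- and $\sdp$-level contributions, which are common randomness and hence do not shrink, giving $\mathrm{RDV}$ and $\mathrm{DPVAR}$—versus which arise from \emph{per-member} sampling—the $\theta$- and $D_s$-level contributions, which average out and carry the $1/m$ factor, giving $\mathrm{SDV}$ and $\mathrm{MV}$. Relative to Theorem~\ref{thm:mse-synthetic-data-decomposition}, the only genuinely new feature is the insertion of the $\sdp$ level between $D_r$ and $\theta$. Equivalently, one could derive the result by applying Theorem~\ref{thm:mse-synthetic-data-decomposition} with $\sdp$ playing the role of the real data—so that its expectations over $\sdp$ are taken under the marginal $\E_{D_r}\E_{\sdp|D_r}$—and then splitting that theorem's real-data-variance term via one further law of total variance over $D_r$ into $\mathrm{RDV} + \mathrm{DPVAR}$, which makes transparent exactly where the extra term comes from.
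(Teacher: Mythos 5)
Your proposal is correct, and it in fact contains two valid routes. Your primary route is a direct, self-contained derivation: peel off the noise, telescope the bias via the cancelling $f_\theta$ terms, and then apply the law of total variance along the four-level filtration $D_r \to \sdp \to \theta_{1:m} \to D_s^{1:m}$, with the key bookkeeping that the shared-randomness levels ($D_r$, $\sdp$) contribute $\mathrm{RDV}$ and $\mathrm{DPVAR}$ without any $1/m$ contraction while the per-member levels ($\theta_i$, $D_s^i$) contract by $1/m$ to give $\frac{1}{m}\mathrm{MV}$ and $\frac{1}{m}\mathrm{SDV}$; this checks out, including the use of the Markov property $\theta \perp D_r \mid \sdp$ to identify $\E[\hat{g} \mid D_r, \sdp] = \E_{D_s|\sdp}[g(x;D_s)]$ and the tower property to reduce the top level to $\Var_{D_r}\E_{D_s|D_r}[g(x;D_s)]$. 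The paper's own proof is exactly the shortcut you describe in your closing sentence: substitute $\sdp$ for $D_r$ in Theorem~\ref{thm:mse-synthetic-data-decomposition}, which yields the decomposition with the single variance term $\Var_{D_r,\sdp}\E_{D_s|\sdp}[g(x;D_s)]$ in place of $\mathrm{RDV} + \mathrm{DPVAR}$, and then split that term by one further law of total variance over $D_r$. The paper's route is shorter because it reuses the earlier theorem wholesale; your direct route costs more bookkeeping but makes explicit where each term originates and why only the per-member levels carry the $1/m$ factor, which the two-step substitution argument leaves implicit inside the proof of Theorem~\ref{thm:mse-synthetic-data-decomposition}.
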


The takeaways from Theorem~\ref{thm:mse-dp-synthetic-data-decomposition} are mostly the same 
as from Theorem~\ref{thm:mse-synthetic-data-decomposition}, and the estimator from 
Section~\ref{sec:estimating-effect-multiple-syn-datasets} also applies. The main difference is the 
DPVAR term in Theorem~\ref{thm:mse-dp-synthetic-data-decomposition}, which accounts for the 
added DP noise. As expected, the impact of DPVAR cannot be reduced with additional synthetic 
datasets.

\section{EXPERIMENTS}\label{sec:experiments}

In this section, we describe our experiments. The common theme in all 
of them is generating synthetic data and evaluating the performance of several downstream 
prediction algorithms trained on the synthetic data. The performance evaluation uses a 
test set of real data, which is split from the whole dataset before generating synthetic data. Our code is available at \url{https://github.com/DPBayes/generative-ensemble-bias-variance-decomposition}.

The downstream algorithms we consider are nearest neighbours with 1 or 5 neighbours (1-NN and 5-NN),
decision tree (DT), random forest (RF), gradient boosted trees (GB), a multilayer perceptron (MLP) and a 
support vector machine (SVM) for both classification and regression. We also 
use linear regression (LR) and ridge regression (RR) on regression tasks, and logistic regression (LogR) on 
classification tasks, though we omit linear regression from the main text plots, as its results
are nearly identical to ridge regression. Decision trees and 1-NN have a very high variance, 
as both interpolate 
the training data with the hyperparameters we use. Linear, ridge, and logistic regression 
have fairly small variance in contrast. Appendix~\ref{sec:experiment-details} contains more 
details on the experimental setup, including details on the datasets, and downstream 
algorithm hyperparameters.

In each of our figures comparing prediction performance, 
we have included a horizontal 
black line showing the performance of the best downstream
predictor without synthetic data. The downstream models
include a random forest and gradient boosted trees, which 
are ensemble methods, so this line serves as a baseline
for ensembles without synthetic data. Another relevant
baseline is the performance of an ensemble with only one
synthetic dataset, which is given by the results of 
random forests and gradient boosted trees with $m = 1$.

\subsection{Effect of Multiple Synthetic Datasets}\label{sec:main-experiment}

\begin{figure*}[t]
    \begin{subfigure}{\textwidth}
        \centering
        \includegraphics[width=\textwidth]{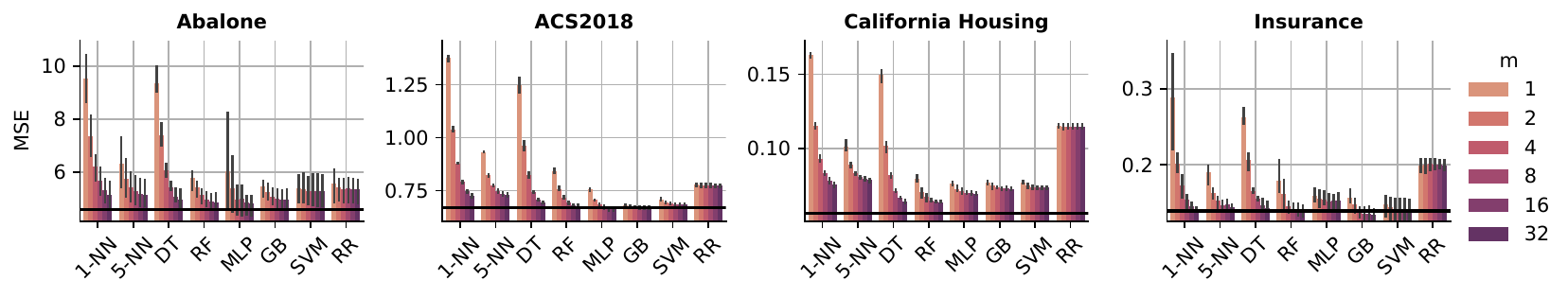}

        \vspace{-2mm}
        \caption{Regression datasets}
        \label{fig:main-results-regression}
    \end{subfigure}
    \begin{subfigure}{\textwidth}
        \centering
        \includegraphics[width=0.75\textwidth]{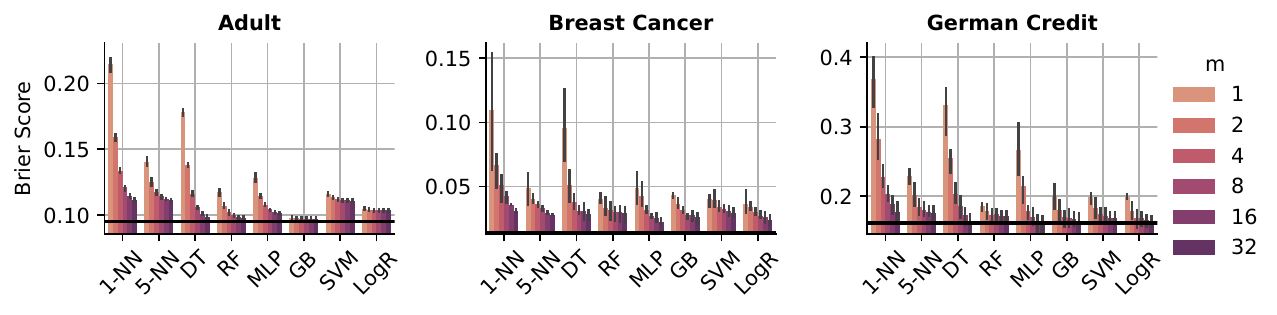}
        
        \vspace{-2mm}
        \caption{Classification datasets}
        \label{fig:main-results-classification}
    \end{subfigure}
    \caption{
        MSE on regression datasets (a) or Brier score on classification datasets (b) of the 
        ensemble of downstream predictors, with varying 
        number of synthetic datasets $m$ from synthpop. Increasing the number of 
        synthetic datasets generally decreases both metrics, especially for decision trees and 1-NN.
        The predictors are nearest neighbours 
        with 1 or 5 neighbours (1-NN and 5-NN), decision tree (DT), random forest (RF), a multilayer perceptron (MLP), 
        gradient boosted trees (GB), a support vector machine (SVM), ridge regression (RR) and 
        logistic regression (LogR). 
        The black line is the MSE of the best predictor on real data. 
        Tables~\ref{table:abalone-results} to 
        \ref{table:german-credit-brier-results}
        in the Appendix contain 
        the numbers from the plots.
    }
    \label{fig:main-results}
    \vspace{-4mm}
\end{figure*}

As our first experiment, we evaluate the performance of the synthetic data ensemble on 7 datasets,
containing 4 regression and 3 classification tasks. See Appendix~\ref{sec:dataset-details} for 
details on the datasets. We use the synthetic data generators 
DDPM~\citep{kotelnikovTabDDPMModellingTabular2023} and 
synthpop~\citep{nowokSynthpopBespokeCreation2016}, which we selected after a preliminary experiment 
described in Appendix~\ref{sec:synthetic-data-algorithms-comparison}. We only plot
the results from synthpop in the main text to save space, and defer the results of DDPM to 
Appendix~\ref{sec:extra-plots}.
We generate 32 synthetic datasets, of which between 1 and 32 are used to train the ensemble.
The results are averaged over 3 runs with different train-test splits. We compute error bars as 95\% confidence intervals
obtained from bootstrapping over the repeats.

On the regression datasets, our error metric is MSE, which is the subject of 
Theorem~\ref{thm:mse-synthetic-data-decomposition}. The results in 
Figure~\ref{fig:main-results-regression} show that 
a larger number of synthetic datasets generally decreases MSE. The decrease is especially clear 
with downstream algorithms that have a high variance like decision trees and 
1-NN. Low-variance algorithms like ridge regression have very little if any 
decrease from multiple synthetic datasets. This is consistent with 
Theorem~\ref{thm:mse-synthetic-data-decomposition}, where the number of synthetic datasets 
only affects the variance-related terms.

On the classification datasets, we consider 4 error metrics. Brier 
score~\citep{brier1950verification} is MSE of the class probability 
predictions, so Theorem~\ref{thm:mse-synthetic-data-decomposition} applies to it. Cross entropy is 
a Bregman divergence, so Theorem~\ref{thm:bregman-synthetic-data-decomposition} from 
Appendix~\ref{app:bregman-divergence-decomposition} applies to it.
We also included accuracy and area under the ROC curve (AUC) even though our theory does not 
apply to them, as they are common and interpretable error metrics, so it is interesting
to see how multiple synthetic datasets affect them. We use their complements in the plots, so 
that lower is better for all plotted metrics. We only present the Brier score results 
in the main text in Figure~\ref{fig:main-results-classification}, and defer the rest
to Figures~\ref{fig:adult-results}, \ref{fig:breast-cancer-results} and 
\ref{fig:german-credit-results} in Appendix~\ref{sec:extra-plots}.

Because Theorem~\ref{thm:bregman-synthetic-data-decomposition} only applies to cross entropy when
averaging log probabilities instead of probabilities, we compare both ways of averaging in the 
Appendix~\ref{sec:extra-plots}, but only include probability averaging in the main text.

The results on the classification datasets in Figure~\ref{fig:main-results-classification}
are similar to the regression experiment. A larger 
number of synthetic datasets generally decreases the score, especially for the 
high-variance models. 

In Figure~\ref{fig:variance-estimation-subset},
we estimate the MV and SDV terms of
the decomposition in Theorem~\ref{thm:mse-synthetic-data-decomposition}.
The results show that MV depends mostly on the 
downstream predictor, while SDV also depends 
on the synthetic data generator.
The results also confirm 
our claims on the model variances: decision trees and 1-NN have a high variance, while 
linear, ridge and logistic regression have a low variance. 
See Appendix~\ref{app:variance-estimation-experiment},
for details, and Figure~\ref{fig:variance-estimation}
for results on all datasets.

\begin{figure*}
    \begin{subfigure}{0.5\textwidth}
        \centering 
        \includegraphics[width=\textwidth]{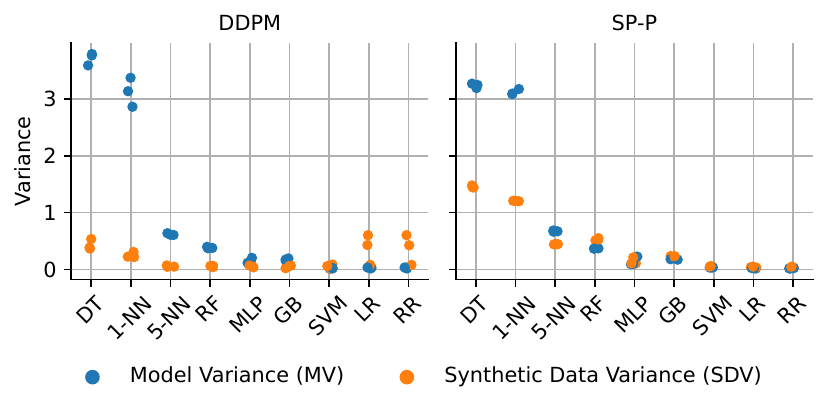}
        \vspace{-6mm}
        \caption{Abalone}
    \end{subfigure}
    \begin{subfigure}{0.5\textwidth}
        \centering 
        \includegraphics[width=\textwidth]{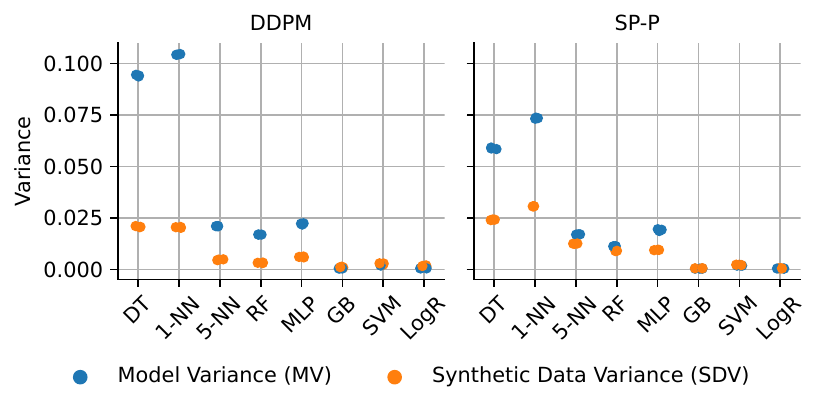}
        \vspace{-6mm}
        \caption{Adult}
    \end{subfigure}
    \caption{
        Estimating the MV and SDV terms from the decomposition. Decision trees 
        have high variances on all datasets, while linear, ridge and logistic regression
        have low variances. MV depends mostly on the predictor, while SDV depends 
        on both the predictor and synthetic data generation algorithm. The 
        points are the averages of estimated MV and SDV,
        averaged over the test data, from 3 repeats with different train-test splits.
        See Figure~\ref{fig:variance-estimation} in
        the Appendix for results on all datasets.
    }
    \label{fig:variance-estimation-subset}
\end{figure*}

In Appendix~\ref{app:one-large-synthetic-dataset} we compare an alternative
to the ensemble of multiple synthetic datasets: generating a single 
large synthetic datasets with an equal number of datapoints as all 
the multiple synthetic datasets combined. One could expect generating a
larger synthetic dataset to also reduce variances while saving on
the computational cost of training multiple generative models.
However, in the cases we examined in Figure~\ref{fig:one-large-results-regression}, a single larger dataset 
gave at best a small improvement, and sometimes even increased
the error. In contrast, adding more synthetic datasets often 
decreases error and never increases it.
As a result, as a default choice, we recommend generating as many synthetic 
datasets as possible that have the same size as the 
original. We do not recommend making 
the synthetic datasets smaller, since the computational
saving is likely small, and the additional generators
that could be trained with the savings have diminishing 
returns due to the rule-of-thumb that $m$ synthetic datasets
give a $(1 - \frac{1}{m})$ fraction of the benefit.

\begin{figure*}[t]
        \centering
        \includegraphics[width=\textwidth]{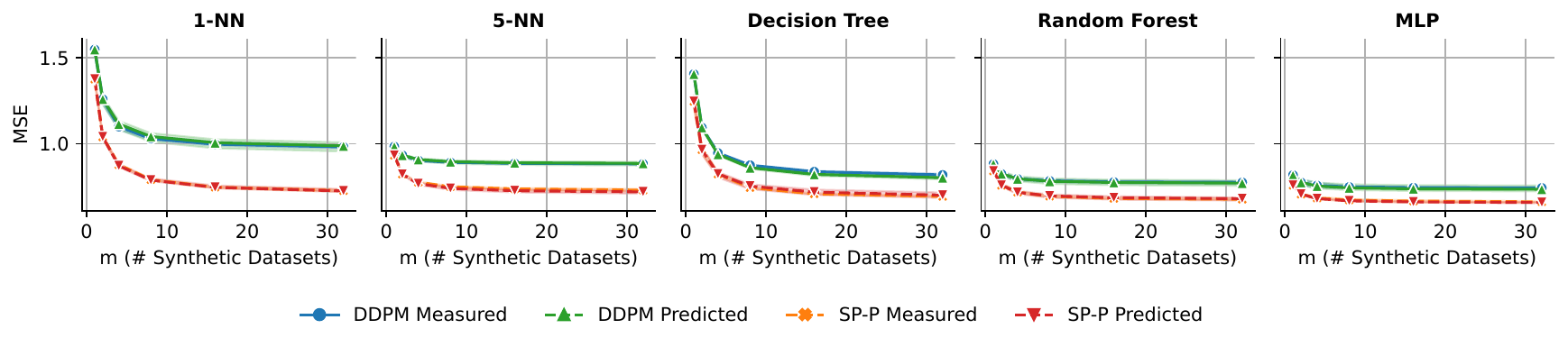}
        \vspace{-3mm}
    \caption{
        MSE or Brier score prediction on the ACS 2018 dataset. The predictions are very accurate on this 
        dataset. The solid lines for DDPM and synthpop (SP-P) show the same error MSE or Brier score
        as Figure~\ref{fig:main-results}, 
        while the dashed lines show predicted MSE or Brier score. 
        1-NN and 5-NN are nearest neighbours with 1 or 5 neighbours.
        We omitted downstream algorithms with uninteresting flat curves. See 
        Figure~\ref{fig:mse-prediction-regression1} in the Appendix for the full
        figure, and Figures~\ref{fig:mse-prediction-regression2} and 
        \ref{fig:mse-prediction-classification1} for the other datasets. 
        Tables~\ref{table:mse-prediction-acs-2018} and \ref{table:mse-prediction-german-credit} 
        in the Appendix contain the 
        numbers from the plots. 
    }
    \label{fig:mse-prediction-main}
    \vspace{-4mm}
\end{figure*}

\subsection{Predicting Performance from Two Synthetic Datasets}\label{sec:mse-prediction-experiment}
Next, we evaluate the predictions our rule of thumb from 
Section~\ref{sec:estimating-effect-multiple-syn-datasets} makes. To recap, our rule of 
thumb predicts that the maximal benefit from multiple synthetic datasets is 
$2(\mathrm{MSE}_1 - \mathrm{MSE}_2)$, and $m$ synthetic datasets achieve 
a $1 - \frac{1}{m}$ fraction of this benefit, as shown in 
\eqref{eq:mse-m-estimator-two-syn-datasets}.

To evaluate the predictions from the rule, we estimate the MSE on regression tasks 
and Brier score on classification tasks for one and two synthetic datasets from the
test set. The setup is otherwise identical to Section~\ref{sec:main-experiment},
and the train-test splits are the same. 
We plot the predictions from the rule, and compare them with 
the measured test errors with more than two synthetic datasets. 

Figure~\ref{fig:mse-prediction-main} contains the results for the ACS 2018 datasets,
and Figures~\ref{fig:mse-prediction-regression1} to \ref{fig:mse-prediction-classification1}
in the Appendix contain the results for the other datasets. The predictions are very accurate 
on ACS 2018, and reasonably accurate on the other datasets. 
The variance of the prediction depends heavily on the variance of the errors computed 
from the test data.

We also evaluated the rule on random forests without synthetic data, as it also applies to 
them. In this setting, the number of trees in the random forest is analogous to the 
number of synthetic datasets. We use the same datasets as in the previous experiments and 
use the same train-test splits of the real data. The results are in 
Figure~\ref{fig:random-forest-mse-prediction} in the Appendix. The prediction is accurate 
when the test error is accurate, but can have high variance.

\subsection{Differentially Private Synthetic Data}

\begin{figure*}
    \centering 
    \includegraphics[width=0.9\textwidth]{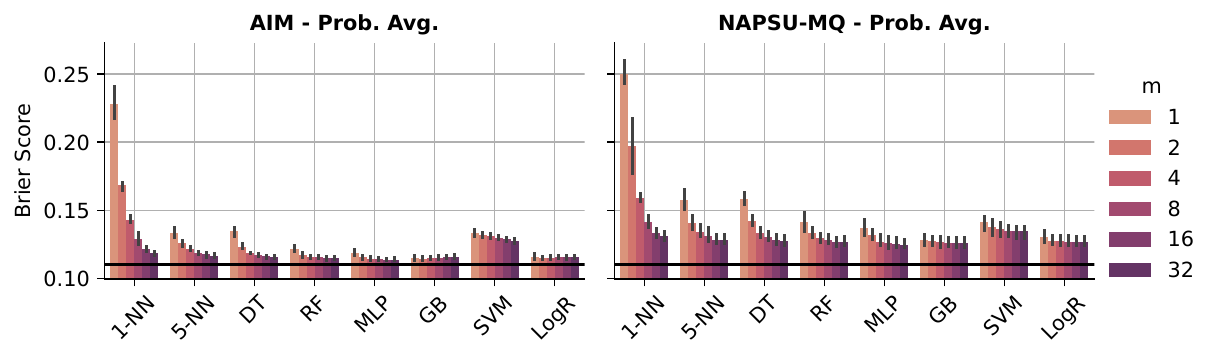}
    \caption{Brier score of the ensemble of downstream predictors with varying numbers of synthetic datasets 
    $m$, generated with the DP methods AIM or NAPSU-MQ from the Adult dataset with a reduced set of 
    features. Increasing the number of datasets generally decreases the score, even with AIM, which 
    splits the privacy budget between $m$ synthetic datasets. 
    The privacy parameters are $\epsilon = 1.5$, $\delta = n^{-2} \adultdelta$.
    The predictors are 
    the same as in Figure~\ref{fig:main-results}.
    The black lines show the loss of the best non-DP
    downstream predictor trained on real data. 
    Table~\ref{table:dp-experiment-brier} contains the numbers from the 
    plots, and Figure~\ref{fig:dp-experiment-all-metrics} contains plots of the other error metrics.
    }
    \label{fig:dp-experiment-brier}
    \vspace{-4mm}
\end{figure*}

In this experiment, we evaluate the performance of the generative ensemble in the setting of
Theorem~\ref{thm:mse-dp-synthetic-data-decomposition}, where $m$ synthetic datasets are generated from a 
single noisy summary $\sdp$ of the real data. We compare with splitting the privacy 
budget between $m$ synthetic datasets.

The algorithms we use are AIM~\citep{mckennaAIMAdaptiveIterative2022}, which needs to split the privacy 
budget between the $m$ synthetic datasets, and NAPSU-MQ~\citep{raisaNoiseawareStatisticalInference2023},
which uses a single noisy summary. The dataset is the Adult dataset with a reduced set of columns,
which is needed to keep the runtime of NAPSU-MQ reasonable. The downstream task is classification, so we 
use the same 4 metrics, and both probability and log-probability averaging, as in the non-DP classification 
experiment. We use fairly strict privacy parameters of $\epsilon = 1.5$, $\delta = n^{-2} \adultdelta$.
See Appendix~\ref{sec:experiment-details}
for the full details of the setup.

The results for Brier score are in Figure~\ref{fig:dp-experiment-brier}, and the results for all metrics 
are in Figure~\ref{fig:dp-experiment-all-metrics} in the Appendix. Increasing $m$ often always 
improves the results, which is expected for NAPSU-MQ due to Theorem~\ref{thm:mse-dp-synthetic-data-decomposition},
but somewhat surprising for AIM, which splits the privacy budget. For AIM, $m = 4$ looks like a sensible default choice
for the well-performing downstream predictors, since 
performance does not greatly change either way 
with $m > 4$.
Once again, the high-variance predictors 
1-NN, 5-NN and decision trees benefit the most from multiple synthetic datasets.
AIM always outperforms NAPSU-MQ, which is 
likely a result of AIM being able to handle a more complicated set of input queries than NAPSU-MQ.

\section{DISCUSSION}\label{sec:discussion}

\pg{Limitations}
Our main theory assumes that the synthetic datasets are generated i.i.d.\ given either 
the real data or a DP summary, which for example leaves out
generative ensembles that explicitly encourage diversity
between synthetic datasets in some way.
We generalise the bias-variance decompositions to 
non-i.i.d.\ settings in Appendix~\ref{sec:mse-decomposition-non-iid}. The implications of 
the decompositions, like the rule-of-thumb on the number of synthetic datasets, 
do not apply in general if i.i.d.\ assumption is removed, 
but they can still apply with additional assumptions. 
We give an example in Appendix~\ref{sec:mse-decomposition-non-iid}.

\pg{Conclusion}
We derived bias-variance decompositions for using synthetic data in several cases:
for MSE or Brier score with i.i.d.\ synthetic datasets given the real data
(Section~\ref{sec:mse-decomposition-ge})
and MSE with i.i.d.\ synthetic datasets given a DP summary of the real data 
(Section~\ref{sec:mse-decomposition-dp}).
We generalised these decompositions to non-i.i.d.\ synthetic datasets
(Appendix~\ref{sec:mse-decomposition-non-iid})
and Bregman divergences (Appendix~\ref{app:bregman-divergence-decomposition}).
The decompositions make actionable predictions, such as yielding a simple rule of 
thumb that can be used to select the number of synthetic datasets.
We empirically examined the performance 
of generative ensembles on several real datasets and downstream predictors,
and found that the predictions of the theory generally hold in practice 
(Section~\ref{sec:experiments}).
These findings significantly increase the theoretical understanding of 
generative ensembles, which is very limited in prior literature.

\subsubsection*{Acknowledgments}
This work was supported by the Research Council of Finland 
(Flagship programme: Finnish Center for Artificial Intelligence, 
FCAI as well as Grants 356499 and 359111), the Strategic Research Council 
at the Research Council of Finland (Grant 358247)
as well as the European Union (Project
101070617). Views and opinions expressed are however
those of the author(s) only and do not necessarily reflect
those of the European Union or the European Commission. Neither the European 
Union nor the granting authority can be held responsible for them.
The authors wish to thank the Finnish Computing Competence 
Infrastructure (FCCI) for supporting this project with 
computational and data storage resources.

\bibliography{Multiple_Synthetic_Datasets_ML}
\bibliographystyle{icml2024}

\section*{Checklist}

 \begin{enumerate}

 \item For all models and algorithms presented, check if you include:
 \begin{enumerate}
   \item A clear description of the mathematical setting, assumptions, algorithm, and/or model. [Yes]
   \item An analysis of the properties and complexity (time, space, sample size) of any algorithm. [Not Applicable]
   \item (Optional) Anonymized source code, with specification of all dependencies, including external libraries. [Yes]
 \end{enumerate}

 \item For any theoretical claim, check if you include:
 \begin{enumerate}
   \item Statements of the full set of assumptions of all theoretical results. [Yes]
   \item Complete proofs of all theoretical results. [Yes]
   \item Clear explanations of any assumptions. [Yes]     
 \end{enumerate}

 \item For all figures and tables that present empirical results, check if you include:
 \begin{enumerate}
   \item The code, data, and instructions needed to reproduce the main experimental results (either in the supplemental material or as a URL). [Yes]
   \item All the training details (e.g., data splits, hyperparameters, how they were chosen). [Yes]
         \item A clear definition of the specific measure or statistics and error bars (e.g., with respect to the random seed after running experiments multiple times). [Yes]
         \item A description of the computing infrastructure used. (e.g., type of GPUs, internal cluster, or cloud provider). [Yes]
 \end{enumerate}

 \item If you are using existing assets (e.g., code, data, models) or curating/releasing new assets, check if you include:
 \begin{enumerate}
   \item Citations of the creator If your work uses existing assets. [Yes]
   \item The license information of the assets, if applicable. [Yes]
   \item New assets either in the supplemental material or as a URL, if applicable. [Yes]
   \item Information about consent from data providers/curators. [No] None of the providers of the datasets we used 
   provide information about consent.
   \item Discussion of sensible content if applicable, e.g., personally identifiable information or offensive content. [Not Applicable]
 \end{enumerate}

 \item If you used crowdsourcing or conducted research with human subjects, check if you include:
 \begin{enumerate}
   \item The full text of instructions given to participants and screenshots. [Not Applicable]
   \item Descriptions of potential participant risks, with links to Institutional Review Board (IRB) approvals if applicable. [Not Applicable]
   \item The estimated hourly wage paid to participants and the total amount spent on participant compensation. [Not Applicable]
 \end{enumerate}

 \end{enumerate}

\newpage

\appendix
\onecolumn

\renewcommand\thefigure{S\arabic{figure}}
\renewcommand\thetable{S\arabic{table}}
\setcounter{figure}{0}

\section{MISSING PROOFS}\label{sec:missing-proofs}

\theoremmsesyntheticdatadecomposition*
\begin{proof}
    With $m$ synthetic datasets $D_s^{1:m}$ and model $\hat{g}(x, D_s^{1:m})$ that combines the synthetic 
    datasets, the classical bias-variance decomposition gives
    \begin{align}
        \E_{y, D_r, D_s^{1:m}}[(y - \hat{g}(x; D_s^{1:m}))^2]
        = (f(x) - \E_{D_r, D_s^{1:m}}[\hat{g}(x; D_s^{1:m})])^2 
        + \Var_{D_r, D_s^{1:m}}[\hat{g}(x; D_s^{1:m})] + \Var_{y}[y].
    \end{align}

    Using the independence of the synthetic datasets, these can be decomposed further:
    \begin{equation}
        \E_{D_r, D_s^{1:m}}[\hat{g}(x; D_s^{1:m})] 
        = \E_{D_r}\E_{D_s^{1:m}|D_r}\left[\frac{1}{m}\sum_{i=1}^m g(x, D_s^i)\right]
        = \E_{D_r}\E_{D_s|D_r}[g(x, D_s)]
        = \E_{D_r, D_s}[g(x, D_s)],
    \end{equation}
    
    \begin{equation}
        \begin{split}
            \Var_{D_r, D_s^{1:m}}[\hat{g}(x; D_s^{1:m})]
            &= \E_{D_r}\Var_{D_s^{1:m}|D_r}\left[\frac{1}{m} \sum_{i=1}^m g(x; D_s^i)\right]
            + \Var_{D_r}\E_{D_s^{1:m}|D_r}\left[\frac{1}{m} \sum_{i=1}^m g(x; D_s^i)\right]
            \\&= \frac{1}{m^2} \E_{D_r}\Var_{D_s^{1:m}|D_r}\left[\sum_{i=1}^m g(x; D_s^i)\right]
            + \Var_{D_r}\E_{D_s|D_r}\left[g(x; D_s)\right]
            \\&= \frac{1}{m} \E_{D_r}\Var_{D_s|D_r}\left[g(x; D_s)\right]
            + \Var_{D_r}\E_{D_s|D_r}\left[g(x; D_s)\right],
            \label{eq:variance-decomposition-mean-model}
        \end{split}
    \end{equation}
    and
    \begin{equation}
        \Var_{D_s|D_r}[g(x; D_s)] = \E_{\theta|D_r}\Var_{D_s|\theta}[g(x; D_s)] + \Var_{\theta|D_r}\E_{D_s|\theta}[g(x, D_s)].
    \end{equation}
    
    The bias can be decomposed with $f_\theta(x)$:
    \begin{equation}
        \begin{split}
            f(x) - \E_{D_r, D_s}[g(x; D_s)] 
            &= \E_{D_r, \theta}[f(x) - f_\theta(x) + f_\theta(x) - \E_{D_s|\theta}[g(x; D_s)]]
            \\&= \E_{D_r}[f(x) - \E_{\theta|D_r}[f_\theta(x)]] 
            + \E_{D_r, \theta}[f_\theta(x) - \E_{D_s|\theta}[g(x; D_s)]]
        \end{split}
    \end{equation}
    
    Combining all of these gives the claim.
\end{proof}

\theoremmsedpsyntheticdatadecomposition*
\begin{proof}
    Using $\sdp$ in place of the real data in Theorem~\ref{thm:mse-synthetic-data-decomposition}
    gives
    \begin{equation}
        \mathrm{MSE} = \frac{1}{m}\mathrm{MV} + \frac{1}{m}\mathrm{SDV} 
        + \Var_{D_r, \sdp}\E_{D_s|\sdp}[g(x; D_s)] + 
        (\mathrm{SDB} + \mathrm{MB})^2 + \Var_{y}[y],
    \end{equation}
    where
    \begin{align}
        \mathrm{MSE} &= \E_{y, D_r, \sdp, D_s^{1:m}}[(y - \hat{g}(x; D_s^{1:m}))^2] \\
        \mathrm{MV} &= \E_{D_r, \sdp, \theta} \Var_{D_s|\theta}[g(x; D_s)] \\
        \mathrm{SDV} &= \E_{D_r, \sdp}\Var_{\theta | \sdp}\E_{D_s | \theta}[g(x; D_s)] \\
        \mathrm{SDB} &= \E_{D_r, \sdp}[f(x) - \E_{\theta | \sdp}[f_\theta(x)]] \\
        \mathrm{MB} &= \E_{D_r, \sdp, \theta}[f_\theta(x) - \E_{D_s|\theta}[g(x; D_s)]].
    \end{align}
    
    We can additionally decompose 
    \begin{equation}
        \begin{split}
            \Var_{D_r, \sdp}\E_{D_s|\sdp}[g(x; D_s)]
            &= \E_{D_r}\Var_{\sdp|D_r}\E_{D_s|\sdp}[g(x; D_s)]
            + \Var_{D_r}\E_{\sdp|D_r}\E_{D_s|\sdp}[g(x; D_s)]
            \\&= \E_{D_r}\Var_{\sdp|D_r}\E_{D_s|\sdp}[g(x; D_s)]
            + \Var_{D_r}\E_{D_s|D_r}[g(x; D_s)]
        \end{split}
    \end{equation}
    This reveals the DP-related variance term
    \begin{equation}
        \mathrm{DPVAR} = \E_{D_r}\Var_{\sdp|D_r}\E_{D_s|\sdp}[g(x; D_s)]
    \end{equation}
    so we have
    \begin{equation}
        \mathrm{MSE} = \frac{1}{m}\mathrm{MV} + \frac{1}{m}\mathrm{SDV} 
        + \mathrm{RDV} + \mathrm{DPVAR} + (\mathrm{SDB} + \mathrm{MB})^2 + \Var_y[y].
    \end{equation}
\end{proof}

\section{NON-I.I.D.\ SYNTHETIC DATA}\label{sec:mse-decomposition-non-iid}
Here, we consider the case of non-i.i.d\ synthetic datasets, and allow each synthetic dataset to have 
a different predictor $g_i$. We get a similar decomposition as in 
Theorem~\ref{thm:mse-synthetic-data-decomposition}, but the terms of the decomposition are now 
averages over  the possibly different synthetic data distributions, and there is an 
additional covariance term.
\begin{restatable}{theorem}{theoremmsesyntheticdatadecompositionnoniid}\label{thm:mse-synthetic-data-decomposition-non-iid}
    Let the parameters for $m$ generators $\theta_i\sim p(\theta_i|D_r)$, 
    $i=1,\dotsc,m$, be potentially
    non-i.i.d.\ given the real data $D_r$. 
    Let the synthetic datasets be $D_s^{i} \sim p(D_s^i | \theta_i)$, and
    let $\hat{g}(x; D_s^{1:m}) = \frac{1}{m}\sum_{i=1}^m g_i(x; D_s^i)$. Then
    \begin{equation}
        \mathrm{MSE} = \frac{1}{m}\aMV + \frac{1}{m}\aSDV
        + \COV + \aRDV + (\aSDB + \aMB)^2 + \Var_y[y],
        \label{eq:mse-decomposition-mean-model-non-iid}
    \end{equation}
    where
    \begin{align}
        \MSE &= \E_{y, D_r, D_s^{1:m}}[(y - \hat{g}(x; D_s^{1:m}))^2] \\
        \aMV &= \frac{1}{m}\sum_{i=1}^m \E_{D_r, \theta_i} \Var_{D_s^i|\theta_i}[g_i(x; D_s^i)] \\
        \aSDV &= \frac{1}{m}\sum_{i=1}^m \E_{D_r}
        \Var_{\theta_i | D_r}\E_{D_s^i | \theta_i}[g_i(x; D_s^i)] \\
        \COV &= \frac{1}{m^2}\sum_{i\neq j} 
        \E_{D_r}\left[\Cov_{D_s^i, D_s^j | D_r} [g_i(x; D_s^i), g_j(x; D_s^j)]\right] \\
        \aRDV &= \Var_{D_r}\left[\frac{1}{m} \sum_{i=1}^m \E_{D_s^{i}|D_r}[g_i(x; D_s^i)]\right] \\
        \aSDB &= \frac{1}{m}\sum_{i=1}^m \E_{D_r}\left[
        f(x) - \E_{\theta_{i}|D_r}[f_{\theta_i}(x)]\right] \\
        \aMB &= \frac{1}{m}\sum_{i=1}^m\E_{D_r,\theta_{i}}\left[
        f_{\theta_i}(x) - \E_{D_s^i|\theta_i}[g_i(x, D_s^i)]\right].
    \end{align}
    $f(x) = \E_{y}[y]$ is the optimal predictor for real data and
    $f_{\theta_i}$ is the optimal predictor for the synthetic data generating process
    given parameters $\theta_i$. All random quantities are implicitly conditioned on $x$.
\end{restatable}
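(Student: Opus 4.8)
The plan is to follow the same two-stage strategy as the proof of Theorem~\ref{thm:mse-synthetic-data-decomposition}: apply the classical bias-variance decomposition to the ensemble $\hat{g}$, treating the pair $(D_r, D_s^{1:m})$ as the training randomness, and then expand the resulting squared-bias and variance terms. Concretely, the classical decomposition gives
\begin{equation}
    \MSE = (f(x) - \E_{D_r, D_s^{1:m}}[\hat{g}(x; D_s^{1:m})])^2 + \Var_{D_r, D_s^{1:m}}[\hat{g}(x; D_s^{1:m})] + \Var_y[y],
\end{equation}
and the task reduces to showing the squared bias equals $(\aSDB + \aMB)^2$ and the variance equals $\frac{1}{m}\aMV + \frac{1}{m}\aSDV + \COV + \aRDV$.

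For the bias, I would write $\E_{D_r, D_s^{1:m}}[\hat{g}] = \frac{1}{m}\sum_{i=1}^m \E_{D_r, \theta_i, D_s^i}[g_i]$ by linearity, and then insert $\pm f_{\theta_i}(x)$ inside each summand to split $f(x) - \E_{D_r, \theta_i, D_s^i}[g_i]$ into a term comparing $f$ to $\E_{\theta_i|D_r}[f_{\theta_i}]$ and a term comparing $f_{\theta_i}$ to $\E_{D_s^i|\theta_i}[g_i]$. Averaging over $i$ yields exactly $\aSDB + \aMB$, so the squared bias is $(\aSDB + \aMB)^2$. This step is essentially identical to the i.i.d.\ case and poses no difficulty.

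The variance term is where the non-i.i.d.\ structure matters. I would first apply the law of total variance by conditioning on $D_r$: the between-$D_r$ piece $\Var_{D_r}\E_{D_s^{1:m}|D_r}[\hat{g}]$ is immediately $\aRDV$, while the within-$D_r$ piece is $\E_{D_r}\Var_{D_s^{1:m}|D_r}[\hat{g}]$. Expanding the variance of the average $\frac{1}{m}\sum_i g_i$ as $\frac{1}{m^2}$ times the variance of the sum produces diagonal variance terms $\sum_i \Var_{D_s^i|D_r}[g_i]$ and off-diagonal covariance terms $\sum_{i\neq j}\Cov_{D_s^i, D_s^j|D_r}[g_i, g_j]$; taking $\E_{D_r}$ of the latter gives $\COV$. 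For each diagonal term I would apply the law of total variance a second time, now conditioning on $\theta_i$, splitting $\Var_{D_s^i|D_r}[g_i]$ into $\E_{\theta_i|D_r}\Var_{D_s^i|\theta_i}[g_i]$ and $\Var_{\theta_i|D_r}\E_{D_s^i|\theta_i}[g_i]$; after taking $\E_{D_r}$ and multiplying by $\frac{1}{m^2}$, summing over $i$ recovers $\frac{1}{m}\aMV$ and $\frac{1}{m}\aSDV$ respectively.

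The main obstacle, and the sole structural departure from Theorem~\ref{thm:mse-synthetic-data-decomposition}, is the covariance term: because the $\theta_i$ (and hence the $D_s^i$) are no longer conditionally independent given $D_r$, the cross terms in the variance of the sum do not vanish, and I must retain them as the new $\COV$ term. The bookkeeping is otherwise routine --- the only care needed is tracking the $\frac{1}{m^2}$ prefactor through both applications of the law of total variance so that the diagonal sums collapse to the $\frac{1}{m}$-scaled averaged quantities in the statement.
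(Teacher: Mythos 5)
Your proposal is correct and follows essentially the same route as the paper's own proof: the classical bias--variance decomposition over $(D_r, D_s^{1:m})$, the bias split by inserting $\pm f_{\theta_i}(x)$, and the variance handled by the law of total variance conditioning on $D_r$, expanding the variance of the sum into diagonal and covariance terms, and a second law-of-total-variance step conditioning on $\theta_i$. The bookkeeping of the $\frac{1}{m^2}$ prefactor and the retention of the cross-covariance terms as $\COV$ match the paper exactly.
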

\begin{proof}
    The classical bias-variance decomposition gives
    \begin{align}
        \E_{y, D_r, D_s^{1:m}}[(y - \hat{g}(x; D_s^{1:m}))^2]
        = (f(x) - \E_{D_r, D_s^{1:m}}[\hat{g}(x; D_s^{1:m})])^2 
        + \Var_{D_r, D_s^{1:m}}[\hat{g}(x; D_s^{1:m})] + \Var_{y}[y].
    \end{align}
    For the bias,
    \begin{equation}
        \begin{split}
            &f(x) - \E_{D_r, D_s^{1:m}}[\hat{g}(x; D_s^{1:m})] 
            \\&= f(x) - \E_{D_r}\E_{D_s^{1:m}|D_r}\left[\frac{1}{m}\sum_{i=1}^m g_i(x, D_s^i)\right]
            \\&= \E_{D_r}\E_{D_s^{1:m}|D_r}\left[\frac{1}{m}\sum_{i=1}^m (f(x) - g_i(x, D_s^i))\right]
            \\&= \E_{D_r}\E_{D_s^{1:m}|D_r}\left[\frac{1}{m}\sum_{i=1}^m (f(x) - f_{\theta_i}(x) 
            + f_{\theta_i}(x) - g_i(x, D_s^i))\right]
            \\&= \E_{D_r}\E_{\theta_{1:m}|D_r}\left[\frac{1}{m}\sum_{i=1}^m 
            (f(x) - f_{\theta_i}(x))\right]
            + \E_{D_r}\E_{D_s^{1:m}|D_r}\left[\frac{1}{m}\sum_{i=1}^m
            (f_{\theta_i}(x) - g_i(x, D_s^i))\right]
            \\&= \E_{D_r}\left[\frac{1}{m}\sum_{i=1}^m 
            (f(x) - \E_{\theta_{i}|D_r}[f_{\theta_i}(x)])\right]
            + \E_{D_r,\theta_{1:m}}\left[\frac{1}{m}\sum_{i=1}^m
            (f_{\theta_i}(x) - \E_{D_s^i|\theta_i}[g_i(x, D_s^i)])\right]
            \\&= \frac{1}{m}\sum_{i=1}^m \E_{D_r}\left[
            f(x) - \E_{\theta_{i}|D_r}[f_{\theta_i}(x)]\right]
            + \frac{1}{m}\sum_{i=1}^m\E_{D_r,\theta_{i}}\left[
            f_{\theta_i}(x) - \E_{D_s^i|\theta_i}[g_i(x, D_s^i)]\right].
        \end{split}
    \end{equation}
    
    For the variance,
    \begin{equation}
        \begin{split}
            \Var_{D_r, D_s^{1:m}}[\hat{g}(x; D_s^{1:m})]
            &= \E_{D_r}\Var_{D_s^{1:m}|D_r}\left[\frac{1}{m} \sum_{i=1}^m g_i(x; D_s^i)\right]
            + \Var_{D_r}\E_{D_s^{1:m}|D_r}\left[\frac{1}{m} \sum_{i=1}^m g_i(x; D_s^i)\right]
            \\&= \frac{1}{m^2} \E_{D_r}\Var_{D_s^{1:m}|D_r}\left[\sum_{i=1}^m g_i(x; D_s^i)\right]
            + \Var_{D_r}\left[\frac{1}{m} \sum_{i=1}^m \E_{D_s^{i}|D_r}[g_i(x; D_s^i)]\right],
        \end{split}
    \end{equation}
    \begin{equation}
        \begin{split}
            \Var_{D_s^{1:m} | D_r}\left[\sum_{i=1}^m g_i(x; D_s^i)\right] 
            &= \sum_{i=1}^m \Var_{D_s^i | D_r}[g_i(x; D_s^i)] 
            + \sum_{i\neq j} \Cov_{D_s^i, D_s^j | D_r} [g_i(x; D_s^i), g_j(x; D_s^j)]
        \end{split}
    \end{equation}
    and
    \begin{equation}
        \Var_{D_s^i|D_r}[g_i(x; D_s^i)] = \E_{\theta_i|D_r}\Var_{D_s^i|\theta_i}[g_i(x; D_s^i)] + \Var_{\theta_i|D_r}\E_{D_s^i|\theta}[g_i(x, D_s^i)].
    \end{equation}
\end{proof}

If the synthetic datasets are identically distributed, but not necessarily independent, and the predictors
are identical, Theorem~\ref{thm:mse-synthetic-data-decomposition-non-iid} simplifies to 
\begin{equation}
    \mathrm{MSE} = \frac{1}{m}\mathrm{MV} + \frac{1}{m}\mathrm{SDV} 
    + \left(1 - \frac{1}{m}\right)\mathrm{COV}
    + \mathrm{RDV}+ (\mathrm{SDB} + \mathrm{MB})^2 + \Var_y[y],
    \label{eq:mse-decomposition-mean-model-with-cov}
\end{equation}
where
\begin{equation}
    \begin{split}
        \mathrm{MSE} &= \E_{y, D_r, D_s^{1:m}}[(y - \hat{g}(x; D_s^{1:m}))^2] \\
        \mathrm{MV} &= \E_{D_r, \theta} \Var_{D_s|\theta}[g(x; D_s)] \\
        \mathrm{SDV} &= \E_{D_r}\Var_{\theta | D_r}\E_{D_s | \theta}[g(x; D_s)] \\
        \mathrm{COV} &= \E_{D_r}\left[\Cov_{D_s^1, D_s^2 | D_r}[g(x; D_s^1), g(x; D_s^2)]\right] \\
        \mathrm{RDV} &= \Var_{D_r}\E_{D_s|D_r}[g(x; D_s)] \\
        \mathrm{SDB} &= \E_{D_r}[f(x) - \E_{\theta | D_r}[f_\theta(x)]] \\
        \mathrm{MB} &= \E_{D_r, \theta}[f_\theta(x) - \E_{D_s|\theta}[g(x; D_s)]].
    \end{split}
\end{equation}
which is Theorem~\ref{thm:mse-synthetic-data-decomposition} with the additional covariance term.

In the noisy summary case, we get an analogue of 
Theorem~\ref{thm:mse-dp-synthetic-data-decomposition}.
\begin{restatable}{theorem}{theoremmsedpsyntheticdatadecompositionnoniid}\label{thm:mse-dp-synthetic-data-decomposition-non-iid}
    Let the parameters for $m$ generators $\theta_i \sim p(\theta_i|\sdp)$, 
    $i=1,\dotsc,m$, be potentially non-i.i.d.\ given a DP summary $\sdp$ the real 
    data $D_r$, let the synthetic datasets be $D_s^{i} \sim p(D_s^i | \theta_i)$, and
    let $\hat{g}(x; D_s^{1:m}) = \frac{1}{m}\sum_{i=1}^m g_i(x; D_s^i)$. Then
    \begin{equation}
        \mathrm{MSE} = \frac{1}{m}\aMV + \frac{1}{m}\aSDV
        + \COV + \aRDV + \aDPVAR + (\aSDB + \aMB)^2 + \Var_y[y],
        \label{eq:mse-dp-decomposition-non-iid}
    \end{equation}
    where
    \begin{align}
        \MSE &= \E_{y, D_r, \sdp, D_s^{1:m}}[(y - \hat{g}(x; D_s^{1:m}))^2] \\
        \aMV &= \frac{1}{m}\sum_{i=1}^m \E_{D_r, \sdp, \theta_i} \Var_{D_s^i|\theta_i}[g_i(x; D_s^i)] \\
        \aSDV &= \frac{1}{m}\sum_{i=1}^m \E_{D_r, \sdp}
        \Var_{\theta_i | \sdp}\E_{D_s^i | \theta_i}[g_i(x; D_s^i)] \\
        \COV &= \frac{1}{m^2}\sum_{i\neq j} 
        \E_{D_r, \sdp}\left[\Cov_{D_s^i, D_s^j | \sdp} [g_i(x; D_s^i), g_j(x; D_s^j)]\right] \\
        \aRDV &= \Var_{D_r}\left[\frac{1}{m} \sum_{i=1}^m \E_{D_s^{i}|D_r}[g_i(x; D_s^i)]\right] \\
        \aDPVAR &= \E_{D_r}\Var_{\sdp|D_r}\left[\frac{1}{m} \sum_{i=1}^m 
        \E_{D_s^{i}|\sdp}[g_i(x; D_s^i)]\right] \\
        \aSDB &= \frac{1}{m}\sum_{i=1}^m \E_{D_r,\sdp}\left[
        f(x) - \E_{\theta_{i}|\sdp}[f_{\theta_i}(x)]\right] \\
        \aMB &= \frac{1}{m}\sum_{i=1}^m\E_{D_r,\sdp,\theta_{i}}\left[
        f_{\theta_i}(x) - \E_{D_s^i|\theta_i}[g_i(x, D_s^i)]\right].
    \end{align}
    $f(x) = \E_{y}[y]$ is the optimal predictor for real data and
    $f_{\theta_i}$ is the optimal predictor for the synthetic data generating process
    given parameters $\theta_i$. All random quantities are implicitly conditioned on $x$.
\end{restatable}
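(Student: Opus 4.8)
The plan is to mirror the derivation of Theorem~\ref{thm:mse-dp-synthetic-data-decomposition} from Theorem~\ref{thm:mse-synthetic-data-decomposition}, but starting from the non-i.i.d.\ decomposition of Theorem~\ref{thm:mse-synthetic-data-decomposition-non-iid} instead of the i.i.d.\ one. The key structural observation is that the generative process is a Markov chain $D_r \to \sdp \to \theta_i \to D_s^i$: the parameters are drawn (possibly non-i.i.d.) conditional on the DP summary $\sdp$, and each synthetic dataset depends on the real data only through $\sdp$. Consequently, conditioning on $\sdp$ plays exactly the role that conditioning on $D_r$ played in Theorem~\ref{thm:mse-synthetic-data-decomposition-non-iid}, so that theorem applies directly once $\sdp$ is taken as the outer conditioning variable.

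First I would apply Theorem~\ref{thm:mse-synthetic-data-decomposition-non-iid} verbatim with $\sdp$ substituted for $D_r$. This immediately yields $\aMV$, $\aSDV$, $\COV$, $\aSDB$, and $\aMB$ in the stated form, with the inner conditional variances, covariances, and expectations now taken given $\sdp$, together with the noise term $\Var_y[y]$ and a single residual variance term $\Var_{\sdp}\bigl[\frac{1}{m}\sum_{i=1}^m \E_{D_s^i|\sdp}[g_i(x; D_s^i)]\bigr]$ in place of $\aRDV$. Adjoining $D_r$ to the outer expectation subscripts is harmless, since each of these quantities depends on $D_r$ only through $\sdp$, so marginalising over $D_r$ leaves them unchanged; this reproduces the subscripts exactly as written in the statement.

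Next I would split the residual variance term with the law of total variance over $D_r$:
\begin{equation}
    \Var_{\sdp}\left[\tfrac{1}{m}\sum_{i=1}^m \E_{D_s^i|\sdp}[g_i]\right]
    = \E_{D_r}\Var_{\sdp|D_r}\left[\tfrac{1}{m}\sum_{i=1}^m \E_{D_s^i|\sdp}[g_i]\right]
    + \Var_{D_r}\E_{\sdp|D_r}\left[\tfrac{1}{m}\sum_{i=1}^m \E_{D_s^i|\sdp}[g_i]\right].
\end{equation}
The first summand is precisely $\aDPVAR$. For the second, the Markov property $D_s^i \perp D_r \mid \sdp$ collapses the nested expectation via the tower rule, $\E_{\sdp|D_r}\E_{D_s^i|\sdp}[g_i] = \E_{D_s^i|D_r}[g_i]$, turning it into $\aRDV = \Var_{D_r}\bigl[\frac{1}{m}\sum_{i=1}^m \E_{D_s^i|D_r}[g_i]\bigr]$. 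Collecting the seven terms then produces \eqref{eq:mse-dp-decomposition-non-iid}.

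The only genuine obstacle is bookkeeping: confirming that the tower-property collapse producing $\aRDV$ is legitimate, which hinges entirely on the conditional independence $D_s^i \perp D_r \mid \sdp$ built into the DP generative model. Everything else is a direct reuse of Theorem~\ref{thm:mse-synthetic-data-decomposition-non-iid} plus one application of the law of total variance, so I expect no substantive difficulty beyond carefully tracking which conditioning variables appear in each term and checking that the covariance term $\COV$, now conditioned on $\sdp$, correctly isolates the within-$\sdp$ dependence of the synthetic datasets while the cross-$\sdp$ dependence induced by the shared summary is absorbed into $\aDPVAR$.
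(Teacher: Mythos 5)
Your proposal is correct and follows essentially the same route as the paper: substitute $\sdp$ for $D_r$ in Theorem~\ref{thm:mse-synthetic-data-decomposition-non-iid}, then split the resulting variance term $\Var_{D_r,\sdp}\bigl[\frac{1}{m}\sum_{i=1}^m \E_{D_s^{i}|\sdp}[g_i(x; D_s^i)]\bigr]$ by the law of total variance over $D_r$, identifying the two pieces as $\aDPVAR$ and $\aRDV$ via the tower-property collapse $\E_{\sdp|D_r}\E_{D_s^i|\sdp}[g_i] = \E_{D_s^i|D_r}[g_i]$. Your explicit justification of the conditional-independence step and of adjoining $D_r$ to the expectation subscripts only makes implicit details of the paper's proof explicit; there is no substantive difference.
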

\begin{proof}
    Using $\sdp$ in place of $D_r$ in Theorem~\ref{thm:mse-synthetic-data-decomposition-non-iid}
    gives 
    \begin{equation}
        \MSE = \frac{1}{m}\aMV + \frac{1}{m}\aSDV + \COV 
        + \Var_{D_r,\sdp}\left[\frac{1}{m} \sum_{i=1}^m \E_{D_s^{i}|\sdp}[g_i(x; D_s^i)]\right]
        + (\aSDB + \aMB)^2 + \Var_y[y].
    \end{equation}

    The variance over $D_r$ and $\sdp$ can be decomposed
    \begin{equation}
        \begin{split}
            &\Var_{D_r,\sdp}\left[\frac{1}{m} \sum_{i=1}^m \E_{D_s^{i}|\sdp}[g_i(x; D_s^i)]\right]
            \\&= \E_{D_r}\Var_{\sdp|D_r}\left[\frac{1}{m} \sum_{i=1}^m \E_{D_s^{i}|\sdp}[g_i(x; D_s^i)]\right]
            + \Var_{D_r}\E_{\sdp|D_r}\left[\frac{1}{m} \sum_{i=1}^m \E_{D_s^{i}|\sdp}[g_i(x; D_s^i)]\right]
            \\&= \E_{D_r}\Var_{\sdp|D_r}\left[\frac{1}{m} \sum_{i=1}^m \E_{D_s^{i}|\sdp}[g_i(x; D_s^i)]\right]
            + \Var_{D_r}\left[\frac{1}{m} \sum_{i=1}^m \E_{D_s^{i}|D_r}[g_i(x; D_s^i)]\right].
        \end{split}
    \end{equation}
\end{proof}

\paragraph{Implication of Non-i.i.d.\, Theory}
From Theorems~\ref{thm:mse-synthetic-data-decomposition-non-iid} and \ref{thm:mse-dp-synthetic-data-decomposition-non-iid},
it is clear that the implications of the i.i.d. theory 
do not always hold. For example, when increasing the 
number of synthetic datasets with each generator
being a different model, all of the terms in
Theorems~\ref{thm:mse-synthetic-data-decomposition-non-iid} and \ref{thm:mse-dp-synthetic-data-decomposition-non-iid}
that are averages over the generators can change.

However, we can recover some implications with additional
assumptions. For example, if we assume that the 
generators and downstream predictors are always the 
same, possibly correlated, but the covariance 
term $\COV$ does depend on the number of synthetic datasets,
we can derive a similar MSE prediction rule as in
Section~\ref{sec:estimating-effect-multiple-syn-datasets}
from Equation~\eqref{eq:mse-decomposition-mean-model-with-cov}.

\section{BIAS-VARIANCE DECOMPOSITION FOR BREGMAN DIVERGENCES}\label{app:bregman-divergence-decomposition}

\subsection{Background: Bregman Divergences}
A Bregman divergence~\citep{bregmanRelaxationMethodFinding1967} 
$D_F\colon \R^d\times \R^d \to \R$ is a loss function 
\begin{equation}
    D_F(y, g) = F(y) - F(g) - \nabla F(g)^T (y - g)
\end{equation}
where $F\colon \R^d \to \R$ is a strictly convex differentiable function. Many common error metrics,
like MSE and cross entropy, can be expressed as expected values of a Bregman divergence.
In fact, proper scoring rules\footnote{
    Proper scoring rules are error metrics that are minimised by predicting the correct 
    probabilities.
} 
can be characterised via Bregman 
divergences~\citep{gneitingStrictlyProperScoring2007,kimparaProperLossesDiscrete2023}.
Table~\ref{tab:bregman-divergences} shows how the metrics we consider are 
expressed as Bregman divergences~\citep{guptaEnsemblesClassifiersBiasVariance2022}.

\citet{pfauGeneralizedBiasvarianceDecomposition2013} derive the following bias-variance decomposition for 
Bregman divergences:
\begin{equation}
    \underbrace{\E[D(y, g)]}_{\mathrm{Error}} = \underbrace{\E[D(y, \E y)]}_{\mathrm{Noise}} 
    + \underbrace{D(\E y, \CP g)}_{\mathrm{Bias}} 
    + \underbrace{\E D(\CP g, g)}_{\mathrm{Variance}}
    \label{eq:bregman-ensemble-decomposition}
\end{equation}
$y$ is the true value, and $g$ is the predicted value. All of the random quantities are conditioned on 
$x$.
$\CP$ is a \emph{central prediction}:
\begin{equation}
    \CP g = \argmin_{z} \E D(z, g).
\end{equation}
The variance term can be used to define a generalisation of variance:
\begin{equation}
    \VP g = \E D(\CP g, g)
\end{equation}
$\CP$ and $\VP$ can also be defined conditionally on some random variable $Z$
by making the expectations conditional on $Z$ in the definitions.
These obey generalised laws of total expectation and 
variance~\citep{guptaEnsemblesClassifiersBiasVariance2022}:
\begin{equation}
    \CP g = \CP_Z[\CP_{g|Z}[g]]
\end{equation}
and
\begin{equation}
    \VP g = \E_Z[\VP_{g|Z}[g]] + \VP_Z[\CP_{g|Z}[g]].
\end{equation}

The convex dual of $g$ is $g^* = \nabla F(g)$.
The central prediction $\CP g$ can also be expressed as an expectation 
over the convex dual~\citep{guptaEnsemblesClassifiersBiasVariance2022}:
\begin{equation}
    \CP g = (\E g^*)^* 
\end{equation}

\begin{table*}
\caption{Common error metrics as Bregman divergences. $g$ denotes a prediction
in regression and $p$ denotes predicted class probabilities in classification.
$g^{(j)}$ and $p^{(j)}$ denote the predictions of different ensemble members.
$y$ is the correct value in regression, and a one-hot encoding of the correct 
class in classification. The binary classification Brier score only looks at
probabilities for one class. If the multiclass Brier score is used with two 
classes, it is twice the binary Brier score.}
\label{tab:bregman-divergences}
\vskip 0.15in
\begin{center}
\begin{small}
\begin{sc}
\begin{tabular}{lccr}
    \toprule
    Error Metric & $D_F$ & $F(t)$ & Dual Average \\
    \midrule
    MSE & $(y - g)^2$ & $t^2$ & $\frac{1}{m}\sum_{j=1}^m g^{(j)}$ \\
    Brier Score (2 classes) & $(y_0 - p_0)^2$ & $t^2$ & $\frac{1}{m}\sum_{j=1}^m p^{(j)}$\\ 
    Brier Score (Multiclass) & $\sum_{i}(y_i - p_i)^2$ & $\sum_i t_i^2$ & $\frac{1}{m}\sum_{j=1}^m p^{(j)}$\\ 
    Cross Entropy & $-\sum_{i} y_i \ln p_i$ & $\sum_{i} t_i \ln t_i$ & 
    $\mathrm{softmax}\left(\frac{1}{m}\sum_{j=1}^m \ln p^{(j)}\right)$\\
    \bottomrule
\end{tabular}
\end{sc}
\end{small}
\end{center}
\vskip -0.1in
\end{table*}

\citet{guptaEnsemblesClassifiersBiasVariance2022} study the bias-variance 
decomposition of Bregman divergence on a generic ensemble. They show that 
if the ensemble aggregates prediction by averaging them, bias is not preserved,
and can increase. As a solution, they consider \emph{dual averaging},
that is 
\begin{equation}
    \hat{g} = \left(\frac{1}{m}\sum_{i=1}^m g_i^*\right)^*
\end{equation}
for models $g_1, \dotsc, g_m$ forming the ensemble $\hat{g}$.
They show that the bias is preserved in the dual averaged ensemble, and derive a
bias-variance decomposition for them. For mean squared error, the dual average is 
simply the standard average, but for cross entropy, it corresponds to averaging
log probabilities.

\subsection{Bregman Divergence Decomposition for Synthetic Data}\label{sec:bregman-decomposition}

We extend the Bregman divergence decomposition for ensembles from \citet{guptaEnsemblesClassifiersBiasVariance2022}
to generative ensembles.
To prove Theorem~\ref{thm:bregman-synthetic-data-decomposition}, we 
use the following lemma.
\begin{lemma}[\citealt{guptaEnsemblesClassifiersBiasVariance2022}, Proposition 5.3]\label{lemma:iid-dual-ensemble-mean-variance}
    Let $X_1, \dotsc, X_m$ be i.i.d.\  random variables and let 
    $\hat{X} = (\sum_{i=1}^m X_i^*)^*$ be their dual average.
    Then $\CP \hat{X} = \CP X$, $\VP \hat{X} \leq \VP X$ and for any independent $Y$,
    $D(\E Y, \CP \hat{X}) = D(\E Y, \CP X)$.
\end{lemma}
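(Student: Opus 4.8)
The plan is to reduce everything to the dual characterization of the central prediction, $\CP g = (\E g^*)^*$, treating $(\cdot)^*$ as the involutive map between primal and dual coordinates so that $(z^*)^* = z$. First I would establish $\CP \hat X = \CP X$. Since $\hat X = (\frac{1}{m}\sum_{i=1}^m X_i^*)^*$ is the dual average defined above, applying the dual map gives $\hat X^* = \frac{1}{m}\sum_{i=1}^m X_i^*$, and the i.i.d.\ assumption yields $\E \hat X^* = \frac{1}{m}\sum_{i=1}^m \E X_i^* = \E X^*$. Hence $\CP \hat X = (\E \hat X^*)^* = (\E X^*)^* = \CP X$. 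The third claim, $D(\E Y, \CP \hat X) = D(\E Y, \CP X)$, is then immediate: it follows by substituting $\CP \hat X = \CP X$, so the independence of $Y$ is not actually needed for the equality itself.

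The substantive step is the variance inequality $\VP \hat X \le \VP X$. Here I would invoke the standard Bregman duality $D_F(a,b) = D_{F^*}(b^*, a^*)$, where $F^*$ is the Fenchel conjugate of $F$. Applying it to the generalized variances and using $(\CP X)^* = \E X^*$ gives
\[
\VP X = \E D_F(\CP X, X) = \E D_{F^*}(X^*, \E X^*), \qquad
\VP \hat X = \E D_{F^*}\Big(\tfrac{1}{m}\textstyle\sum_i X_i^*, \E X^*\Big).
\]
Because in each case the second argument equals the mean of the first, the linear term of the Bregman divergence vanishes under the expectation, so both reduce to a single conjugate term: $\VP X = \E F^*(X^*) - F^*(\E X^*)$ and $\VP \hat X = \E F^*(\frac{1}{m}\sum_i X_i^*) - F^*(\E X^*)$.

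Comparing the two then amounts to showing $\E F^*(\frac{1}{m}\sum_i X_i^*) \le \E F^*(X^*)$, which follows from Jensen's inequality applied pointwise to the convex conjugate $F^*$, combined with the i.i.d.\ assumption to identify $\frac{1}{m}\sum_i \E F^*(X_i^*)$ with $\E F^*(X^*)$. This gives $\VP \hat X \le \VP X$ and completes the lemma. I expect the main obstacle to be exactly this variance step, specifically the clean reduction of $\VP$ to an expectation of $F^*$ via Bregman duality; once that identity is in place, Jensen closes the argument with essentially no computation, and the first and third claims are short consequences of the dual characterization of $\CP$.
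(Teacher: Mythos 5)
The paper never proves this lemma---it is imported verbatim from \citet{guptaEnsemblesClassifiersBiasVariance2022} (their Proposition 5.3)---so your argument has to be judged on its own merits, and on those it is correct; it is also essentially the canonical argument behind the cited result. The identity $\CP g = (\E g^*)^*$ gives $\CP \hat{X} = \CP X$ immediately, and your variance step is sound: the duality $D_F(a,b) = D_{F^*}(b^*, a^*)$ together with $(\CP X)^* = \E X^*$ reduces each generalised variance to a Jensen gap of the conjugate, $\VP X = \E F^*(X^*) - F^*(\E X^*)$ and $\VP \hat{X} = \E F^*\bigl(\tfrac{1}{m}\sum_i X_i^*\bigr) - F^*(\E X^*)$ (the linear terms vanish because both first arguments have mean $\E X^*$), after which pointwise convexity of $F^*$ and identical distribution finish the inequality. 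Three remarks. First, you silently corrected a typo: the lemma as printed omits the $\tfrac{1}{m}$ in the dual average, and without it $\CP \hat{X} = \CP X$ is false, so your normalisation $\hat{X} = (\tfrac{1}{m}\sum_i X_i^*)^*$, matching the paper's definition of dual averaging elsewhere, is the right reading. Second, you correctly observe that your argument never uses independence (only identical distribution) and that independence of $Y$ plays no role in the third claim once $\CP \hat{X} = \CP X$ is known, since these are deterministic quantities; those hypotheses are inherited from the source's statement rather than needed here. Third, the only technical caveat is that involutivity $(z^*)^* = z$ and the duality identity require $F$ to be of Legendre type (gradient a bijection onto the interior of the dual domain), which is slightly stronger than the paper's ``strictly convex and differentiable'' but holds for all the divergences it uses.
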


\begin{restatable}{theorem}{theorembregmansyntheticdatadecomposition}\label{thm:bregman-synthetic-data-decomposition}
    When the synthetic datasets $D_s^{1:m}$ are i.i.d.\  given the real data $D_r$ and 
    $\hat{g}(x; D_s^{1:m}) = (\frac{1}{m}\sum_{i=1}^m g(x; D_s^i)^*)^*$,
    \begin{equation}
        \mathrm{Error} \leq \mathrm{MV} + \mathrm{SDV} + \mathrm{RDV} + \mathrm{Bias} + \mathrm{Noise} 
    \end{equation}
    where 
    \begin{align}
        \mathrm{Error} &= \E_{y, D_r, D_s^{1:m}}[D(y, \hat{g})] \\
        \mathrm{MV} &= \E_{D_r}\E_{\theta | D_r}\VP_{D_s | \theta}[g] \\
        \mathrm{SDV} &= \E_{D_r}\VP_{\theta | D_r}\CP_{D_s | \theta}[g] \\
        \mathrm{RDV} &= \VP_{D_r}\CP_{D_s|D_r}[g] \\
        \mathrm{Bias} &= D\left(\E_{y} y, \CP_{D_r}\CP_{D_s|D_r} [g]\right)\\
        \mathrm{Noise} &= \E_{y}\left[D(y, \E_y y)\right]
    \end{align}
\end{restatable}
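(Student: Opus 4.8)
The plan is to reduce the generative-ensemble statement to the generic Bregman decomposition \eqref{eq:bregman-ensemble-decomposition} of \citet{pfauGeneralizedBiasvarianceDecomposition2013}, applied to the dual-averaged ensemble $\hat{g}$ while treating the randomness in $(D_r, D_s^{1:m})$ as the randomness of the prediction. This immediately yields $\mathrm{Error} = \mathrm{Noise} + D(\E_y y, \CP \hat{g}) + \VP \hat{g}$, with $\mathrm{Noise} = \E_y[D(y, \E_y y)]$ exactly as stated. It then remains to peel $\CP \hat{g}$ and $\VP \hat{g}$ apart into the model, synthetic-data, and real-data contributions using the generalized laws of total central prediction and total generalized variance, together with Lemma~\ref{lemma:iid-dual-ensemble-mean-variance}.

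For the bias term, I would apply the total central prediction law $\CP \hat{g} = \CP_{D_r}[\CP_{\hat{g}|D_r}[\hat{g}]]$, conditioning on $D_r$. The crucial observation is that the synthetic datasets are i.i.d.\ only conditionally on $D_r$, so Lemma~\ref{lemma:iid-dual-ensemble-mean-variance} must be invoked conditionally: given $D_r$, the predictions $g(x; D_s^i)$ are i.i.d.\ and $\hat{g}$ is their dual average, hence $\CP_{\hat{g}|D_r}[\hat{g}] = \CP_{D_s|D_r}[g]$. Substituting gives $\CP \hat{g} = \CP_{D_r}[\CP_{D_s|D_r}[g]]$, so the Pfau bias term is exactly $D(\E_y y, \CP_{D_r}\CP_{D_s|D_r}[g]) = \mathrm{Bias}$.

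For the variance term, I would apply the law of total generalized variance with $Z = D_r$ to get $\VP \hat{g} = \E_{D_r}[\VP_{\hat{g}|D_r}[\hat{g}]] + \VP_{D_r}[\CP_{\hat{g}|D_r}[\hat{g}]]$. The second summand is exactly $\VP_{D_r}\CP_{D_s|D_r}[g] = \mathrm{RDV}$, reusing the conditional central-prediction identity from the bias step. For the first summand, the inequality part of Lemma~\ref{lemma:iid-dual-ensemble-mean-variance}, again conditionally on $D_r$, gives $\VP_{\hat{g}|D_r}[\hat{g}] \le \VP_{D_s|D_r}[g]$; taking $\E_{D_r}$ and then decomposing the single-dataset variance by conditioning on $\theta$ via $\VP_{D_s|D_r}[g] = \E_{\theta|D_r}\VP_{D_s|\theta}[g] + \VP_{\theta|D_r}\CP_{D_s|\theta}[g]$ yields $\E_{D_r}[\VP_{\hat{g}|D_r}[\hat{g}]] \le \mathrm{MV} + \mathrm{SDV}$. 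Assembling the three pieces produces $\mathrm{Error} \le \mathrm{MV} + \mathrm{SDV} + \mathrm{RDV} + \mathrm{Bias} + \mathrm{Noise}$.

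The step I expect to be the crux is the variance bound: unlike the quadratic MSE case, Lemma~\ref{lemma:iid-dual-ensemble-mean-variance} only supplies the inequality $\VP \hat{X} \le \VP X$ rather than the exact $1/m$ scaling, which is precisely where the decomposition degrades into a bound and why no $1/m$ factors appear on the variance terms here. The remaining delicacy is organizational rather than substantive: every invocation of Lemma~\ref{lemma:iid-dual-ensemble-mean-variance} and of the generalized total-variance law must be performed conditionally, first on $D_r$ and then on $\theta$, since the synthetic datasets are exchangeable but not marginally independent.
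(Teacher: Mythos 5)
Your proposal is correct and follows essentially the same route as the paper's proof: both start from Pfau's decomposition applied to the dual-averaged ensemble with randomness over $(D_r, D_s^{1:m})$, split the bias and generalized variance by conditioning on $D_r$ via the generalized laws of total central prediction and total variance, invoke Lemma~\ref{lemma:iid-dual-ensemble-mean-variance} conditionally on $D_r$ to identify $\CP_{D_s|D_r}[g]$ and to obtain the variance inequality, and then decompose the conditional variance over $\theta$ into MV and SDV. Your closing observation that the lemma's inequality (rather than an exact $1/m$ scaling) is exactly why the result degrades to a bound also matches the paper's own remarks.
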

\begin{proof}
    Plugging the ensemble $\hat{g}$ into the decomposition 
    \eqref{eq:bregman-ensemble-decomposition} gives 
    \begin{equation}
        \E_{y, D_r, D_s^{1:m}}[D(y, \hat{g})] 
        = \E_{y}[D(y, \E_{y} y)] 
        + D(\E_{y} y, \CP_{D_r,D_s^{1:m}} \hat{g}) 
        + \VP_{D_r,D_s^{1:m}} [\hat{g}]
    \end{equation}

    Applying the generalised laws of expectation and variance, and 
    Lemma~\ref{lemma:iid-dual-ensemble-mean-variance} to the variance term, 
    we obtain:
    \begin{align}
        \VP_{D_r, D_s^{1:m}} [\hat{g}] 
        &= \E_{D_r}\VP_{D_s^{1:m}|D_r}[\hat{g}] + \VP_{D_r}\CP_{D_s^{1:m}|D_r}[\hat{g}]
    \end{align}
    For the second term on the right:
    \begin{equation}
        \CP_{D_s^{1:m}|D_r}[\hat{g}] = \CP_{D_s|D_r}[g],
    \end{equation}
    which gives the RDV:
    \begin{equation}
        \VP_{D_r}\CP_{D_s^{1:m}|D_r}[\hat{g}] = \VP_{D_r}\CP_{D_s|D_r}[g].
    \end{equation}
    For the first term on the right:
    \begin{equation}
        \E_{D_r}\VP_{D_s^{1:m}|D_r}[\hat{g}] \leq \E_{D_r}\VP_{D_s|D_r}[g]
    \end{equation}
    and
    \begin{equation}
        \VP_{D_s|D_r}[g] = \E_{\theta | D_r}\VP_{D_s | \theta}[g] 
        + \VP_{\theta | D_r}\CP_{D_s | \theta}[g],
    \end{equation}
    which give MV and SDV. 
    
    For the bias
    \begin{equation}
        \begin{split}
            D(\E_{y} y, \CP_{D_r, D_s^{1:m}} [\hat{g}]) 
            &= D(\E_{y} y, \CP_{D_r}\CP_{D_s^{1:m}|D_r} [\hat{g}])
            \\&= D(\E_{y} y, \CP_{D_r}\CP_{D_s|D_r} [g]).
        \end{split}
    \end{equation}
    Putting everything together proves the claim.
\end{proof}
This decomposition is not as informative as the other two in 
Section~\ref{sec:mse-decomposition}, as is only gives 
an upper bound, and does not explicitly depend on the number of synthetic datasets.

\paragraph{Implication for Non-Synthetic Data Ensembles}
The theory of \citet{guptaEnsemblesClassifiersBiasVariance2022}
assumes that each member of the ensemble is trained with an 
independently sampled real dataset. In practice, this would 
mean that one needs to split the training data between each 
ensemble member to apply their theory, so their theory 
does not apply to any of the ways ensembles are 
usually trained.
In contrast, our theory applies to bagging, as discussed in Section 2.1, so our 
Theorem~\ref{thm:bregman-synthetic-data-decomposition} implies a generalisation of Proposition 5.3 of 
\citet{guptaEnsemblesClassifiersBiasVariance2022} to bagging.

\section{EXPERIMENTAL DETAILS}\label{sec:experiment-details}

\subsection{Datasets}\label{sec:dataset-details}
In our experiments, we use 7 tabular datasets. For four of them, the downstream prediction 
task is regression, and for the other three, the prediction task is binary classification.
Table~\ref{tab:dataset-information} lists some general information on the datasets.
We use 25\% of the real data as a test set, with the remaining 75\% being used to generate
the synthetic data, for all of the datasets. All experiments are repeated several times,
with different randomly selected train-test splits for each repeat.

\begin{table}
\caption{Details on the datasets used in the experiments. \# Cat. and \# Num. are the 
numbers of categorical and numerical features, not counting the target variable. For datasets
with removed rows, the table shows the number of rows after the removals.}
\label{tab:dataset-information}
\vskip 0.15in
\begin{center}
\begin{small}
\begin{sc}
\begin{tabular}{lcccr}
    \toprule
    Dataset & \# Rows & \# Cat. & \# Num. & Task \\
    \midrule
    Abalone & 4177 & 1 & 7 & Regression \\
    ACS 2018 & 50000 & 5 & 2 & Regression \\
    Adult & 45222 & 8 & 4 & Classification \\
    Reduced Adult & 46043 & 7 & 2 & Classification \\
    Breast Cancer & 569 & 0 & 30 & Classification \\
    California Housing & 20622 & 0 & 8 & Regression \\
    German Credit & 1000 & 13 & 7 & Classification \\
    Insurance & 1338 & 3 & 3 & Regression \\
    \bottomrule
\end{tabular}
\end{sc}
\end{small}
\end{center}
\vskip -0.1in
\end{table}

\paragraph{Abalone} \citep[CC BY 4.0]{nashAbalone1995} The abalone dataset contains information on abalones,
with the task of predicting the number of rings on the abalone from other information like weight and size.

\paragraph{ACS 2018}  
(\url{https://www.census.gov/programs-surveys/acs/microdata/documentation.2018.html}, license:
\url{https://www.census.gov/data/developers/about/terms-of-service.html})
This dataset contains several variables from the American community survey (ACS) of 2018, 
with the task of predicting a person's income from the other features. Specifically, the variables we selected are 
AGEP (age), COW (employer type), SCHL (education), MAR (marital status), WKHP (working hours), 
SEX, RAC1P (race), and the target PINCP (income). We take a subset of 50000 datapoints 
from the California data, and log-transform the target variable. We used the 
folktables package~\citep{dingRetiringAdultNew2021} to download the given subset of the 
data.

\paragraph{Adult} \citep[CC BY 4.0]{kohaviAdult1996}
The UCI Adult dataset contains general information on people, with the task of predicting 
whether their income is over \$50000. We drop rows with any missing values.

\paragraph{Reduced Adult} \citep[CC BY 4.0]{kohaviAdult1996}
This dataset is the UCI Adult dataset with a reduced set of features. The subset of features is 
age, workclass, education, marital-status, race, gender, capital-gain, capital-loss
and hours-per-week.\footnote{This subset was used in the NAPSU-MQ experiments 
of \citet{raisaNoiseawareStatisticalInference2023}} 
We binarise capital-gain and capital-loss to indicate whether the original 
value is positive or not for both synthetic data generation and downstream prediction. 
We discretise age and hours-per-week to 5 categories for synthetic data generation, and 
converted back to continuous values for the downstream prediction, as our differentially private synthetic data generators 
require discrete data. We only remove rows with missing values in the included columns, so the 
number of rows is larger with the reduced set of features.

\paragraph{Breast Cancer} \citep[CC BY 4.0]{wolbergBreastCancerWisconsin1995}
The breast cancer dataset contains features derived from images of potential tumors,
with the task of predicting whether the potential tumor is benign or malignant.

\paragraph{California Housing} (\url{https://scikit-learn.org/stable/datasets/real_world.html#california-housing-dataset}, license unknown)
The california housing dataset contains information on housing districts, specifically census block groups, 
in California. The task is predicting the median house value in the district. We removed outlier rows where the 
average number of rooms is at least 50, or the average occupancy is at least 30. According to 
the dataset description, these likely correspond to districts with many empty houses. We 
log-transformed the target variable, as well as the population and median income features.

\paragraph{German Credit} \citep[CC BY 4.0]{hofmannStatlogGermanCredit1994}
The German credit dataset contains information on a bank's customers, with the task of predicting
whether the customers are ``good'' or ``bad''.

\paragraph{Insurance}
(\url{https://www.kaggle.com/datasets/mirichoi0218/insurance/data}, Database Contents License (DbCL) v1.0)
The insurance dataset contains general information on people, like age, gender and BMI, as 
well as the amount they charged their medical insurance, which is the variable to predict.
We take a log transform of the target variable before generating synthetic data.

\subsection{Downstream Prediction Algorithms}\label{sec:downstream-model-details}

We use the scikit-learn\footnote{\url{https://scikit-learn.org/stable/index.html}, BSD 3-Clause License} implementations
of all of the downstream algorithms, which includes probability predictions for all algorithms on the 
classification tasks. We standardise the data before training for all downstream 
algorithms except the tree-based algorithms, specifically decision tree, random forest, and gradient 
boosted trees. This standardisation is done just before downstream training, so the input to the 
synthetic data generation algorithms is not standardised. We use the default hyperparameters of 
scikit-learn for all downstream algorithms except MLP, where we increased the maximum number of iterations
to 1000, as the default was not enough to converge on some datasets. In particular, this means that 
decision trees are trained to interpolate the training data, resulting in high variance of the predictions.

\subsection{DP Experiment}
Both AIM~\citep{mckennaAIMAdaptiveIterative2022} and NAPSU-MQ~\citep{raisaNoiseawareStatisticalInference2023}
generate synthetic data based on noisy values of marginal queries on the real data, which count how many 
rows of the data have given values for given variables. AIM includes a mechanism that chooses a subset of 
queries to measure under DP from a potentially very large workload. We set the workload to be all 
marginal queries over two variables. NAPSU-MQ does not include a query selection mechanism, so we first 
run AIM with $\epsilon = 0.5$, $\delta = \frac{1}{2}n^{-2}$ to select a subset of marginals and then run 
NAPSU-MQ with the selected marginals with $\epsilon = 1$, $\delta = \frac{1}{2}n^{-2}$, which results 
in the same privacy 
bounds~\citep[$\epsilon = 1.5$, $\delta = n^{-2}$;][]{dworkAlgorithmicFoundationsDifferential2014} 
that we used for AIM.
We split the privacy budget between the $m$ synthetic datasets in AIM by dividing the zero-concentrated 
DP~\citep{bunConcentratedDifferentialPrivacy2016} parameter that AIM uses internally by $m$, which keeps 
the total privacy budget fixed.

We used the implementation of the authors for 
AIM\footnote{\url{https://github.com/ryan112358/private-pgm/blob/master/mechanisms/aim.py}, Apache-2.0 license}, and 
used the Twinify library\footnote{\url{https://github.com/DPBayes/twinify}, Apache-2.0 license} for NAPSU-MQ.
We used the default hyperparameters for AIM when generating synthetic data. For selecting the queries for 
NAPSU-MQ, we set the maximum junction tree size\footnote{The junction tree size of the selected queries 
determines how difficult the selected queries are the probabilistic graphical model algorithms NAPSU-MQ and 
AIM use. Their runtime is roughly linear in the junction tree size~\citep{mckennaAIMAdaptiveIterative2022}.} 
hyperparameter to \qty{0.001}{MiB} to ensure that 
NAPSU-MQ does not run for an unreasonable amount of time with the selected queries. The default is 
\qty{80}{MiB}, so the synthetic data generated by AIM is based on a much more comprehensive set of queries
than the synthetic data from NAPSU-MQ. For posterior inference in NAPSU-MQ, we used MCMC with 1000 kept 
samples, 500 warmup samples, and 2 chains.

\subsection{Computational Resources}\label{sec:compute-resources}
We ran the synthetic data generation algorithms DDPM, TVAE and CTGAN on a single GPU, synthpop, AIM and NAPSU-MQ on CPU, and all downstream analysis on CPU, all in a cluster environment.

\section{EXTRA RESULTS}\label{sec:extra-results}

\subsection{Synthetic Data Generation Algorithms}\label{sec:synthetic-data-algorithms-comparison}
We compare several synthetic data generation algorithms to see which algorithms are most 
interesting for subsequent experiments. We use the California housing dataset, where the downstream
task is regression. The algorithms we compare are 
DDPM~\citep{kotelnikovTabDDPMModellingTabular2023}, TVAE~\citep{xuModelingTabularData2019}, 
CTGAN~\citep{xuModelingTabularData2019} and synthpop~\citep{nowokSynthpopBespokeCreation2016}. 
DDPM, TVAE and CTGAN are a diffusion model, 
a variational autoencoder and a GAN that are designed for tabular data. We use the implementations from the
synthcity library\footnote{\url{https://github.com/vanderschaarlab/synthcity},  Apache-2.0 license} for these. 
Synthpop generates synthetic data by sampling one 
column from the real data, and generating the other columns by sequentially training a predictive model on the 
real data, and predicting the next column from the already generated ones. We use the implementation
from the authors~\citep{nowokSynthpopBespokeCreation2016}.\footnote{License: GPL-3}

We use the default hyperparameters 
for all of the algorithms. Synthpop and DDPM have a setting that could potentially affect the randomness in
the synthetic data generation, so we include both possibilities for these settings in this experiment.
For synthpop, this setting is whether the synthetic data is generated from a Bayesian posterior predictive
distribution, which synthpop calls ``proper'' synthetic data. For DDPM, this setting is whether the loss 
function is MSE or KL divergence. In the plots, the two variants of synthpop are called ``SP-P'' and ``SP-IP'' for the 
proper and improper variants, and the variants of DDPM are ``DDPM'' and ``DDPM-KL''.

The results of the comparison are shown in Figure~\ref{fig:synthetic-data-algo-comparison}. We see that synthpop and 
DDPM with MSE loss generally outperform the other generation algorithms, so we select them for 
the subsequent experiments. There is very little difference between the two variants of synthpop,
so we choose the ``proper'' variant due to its connection with the Bayesian reasoning for using 
multiple synthetic datasets.
 
\begin{figure*}
    \centering
    \includegraphics[width=\textwidth]{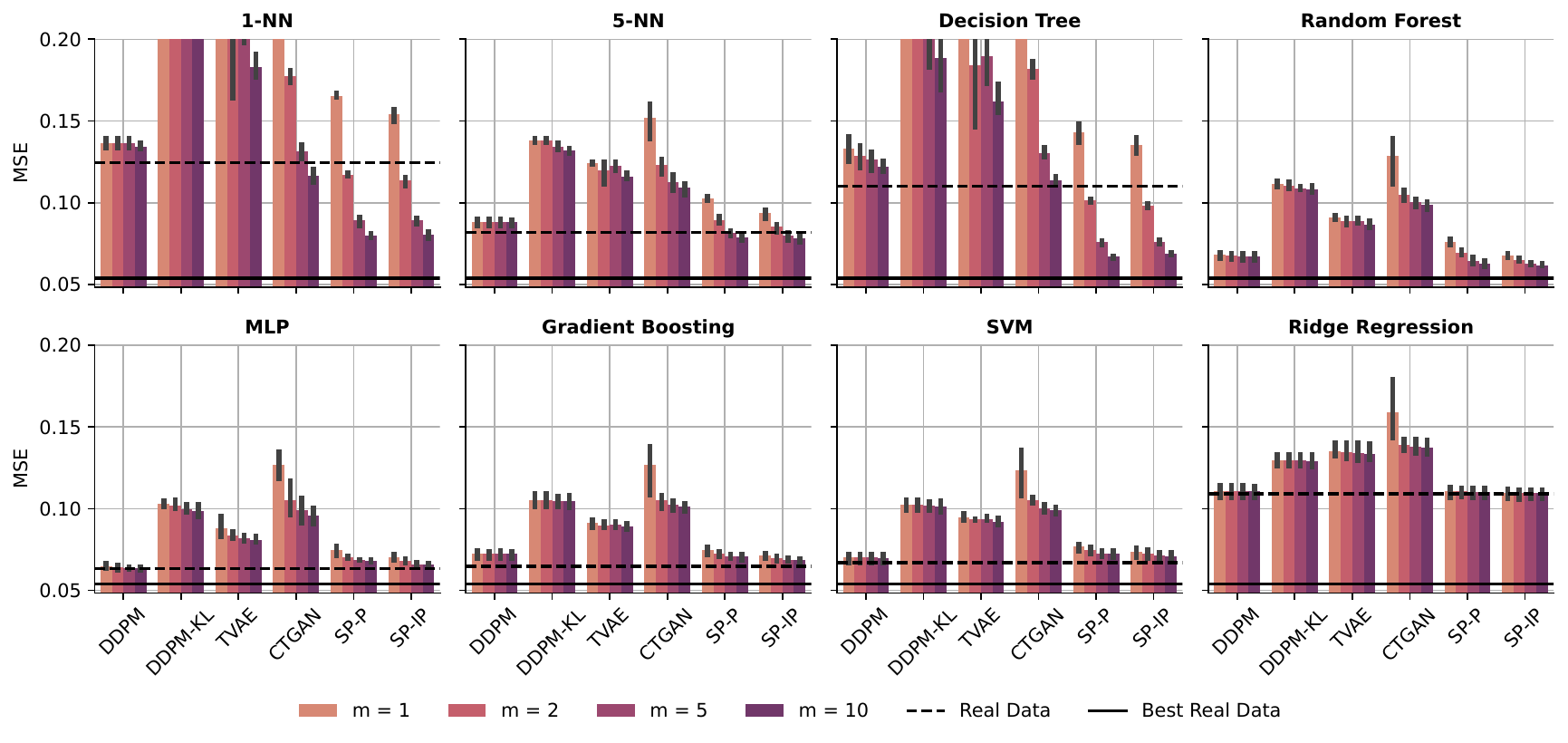}
    \caption{
        Comparison of synthetic data generation algorithms for several prediction algorithms on
        the California housing dataset, with 1 to 10 synthetic datasets. 
        DDPM and synthpop achieve smaller MSE in the downstream predictions, 
        so they were selected for further experiments. SP-P and SP-IP are the proper and improper 
        variants of synthpop, and DDPM-KL is DDPM with KL divergence loss. 1-NN and 5-NN are 
        nearest neighbours with 1 and 5 neighbours.
        The dashed black lines show the performance of each 
        prediction algorithm on the real data, and the solid black line shows the performance of the best 
        predictor, random forest, on the real data. The results are averaged over 3 repeats, with different train-test splits. 
        The error bars are 95\% confidence intervals formed by bootstrapping
        over the repeats. Linear regression was omitted, as it had nearly identical results as ridge regression. Table~\ref{table:synthetic-data-algo-comparison} in the Appendix contains 
        the numbers in the plots, including ridge regression.
    }
    \label{fig:synthetic-data-algo-comparison}
\end{figure*}

\FloatBarrier
\newpage
\subsection{Extra Plots}\label{sec:extra-plots}

\begin{figure}[b]
    \centering
    \includegraphics[width=\textwidth]{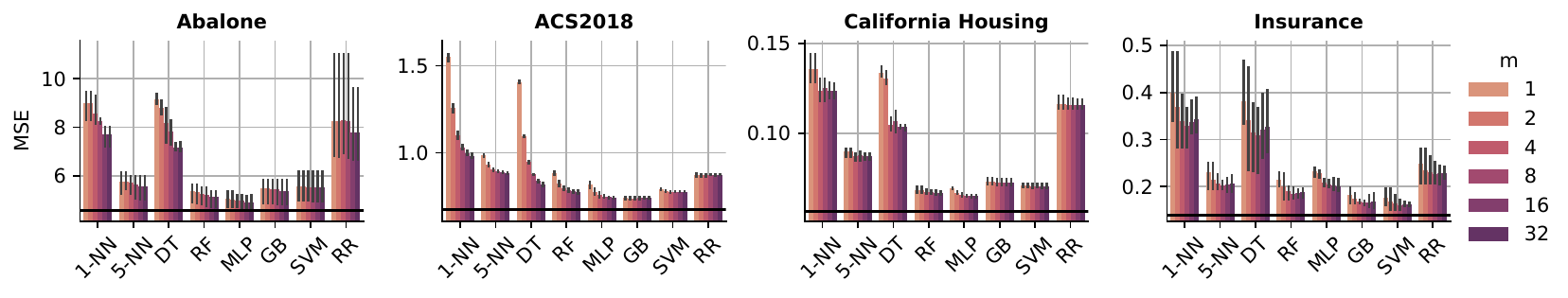}
    \caption{
        MSE on regression datasets of the ensemble of downstream predictors, with varying 
        number of synthetic datasets $m$ from DDPM. Increasing the number of 
        synthetic datasets generally decreases MSE, especially for decision trees and 1-NN.
        The predictors are nearest neighbours 
        with 1 or 5 neighbours (1-NN and 5-NN), decision tree (DT), random forest (RF), a multilayer perceptron (MLP), 
        gradient boosted trees (GB), a support vector machine (SVM) and ridge regression (RR). 
        The black line is the MSE of the best predictor on real data. 
        The results are averaged over 3 repeats. The error bars are 95\% confidence intervals formed 
        by bootstrapping over the repeats.
        We omitted linear regression from the plots, as it had almost identical results to
        ridge regression.
        Tables~\ref{table:abalone-results} to 
        \ref{table:insurance-results} 
        contain the numbers from the plots, including linear regression.
    }
    \label{fig:ddpm-regression-results}
\end{figure}

\begin{figure*}
    \centering
    \includegraphics[width=\textwidth]{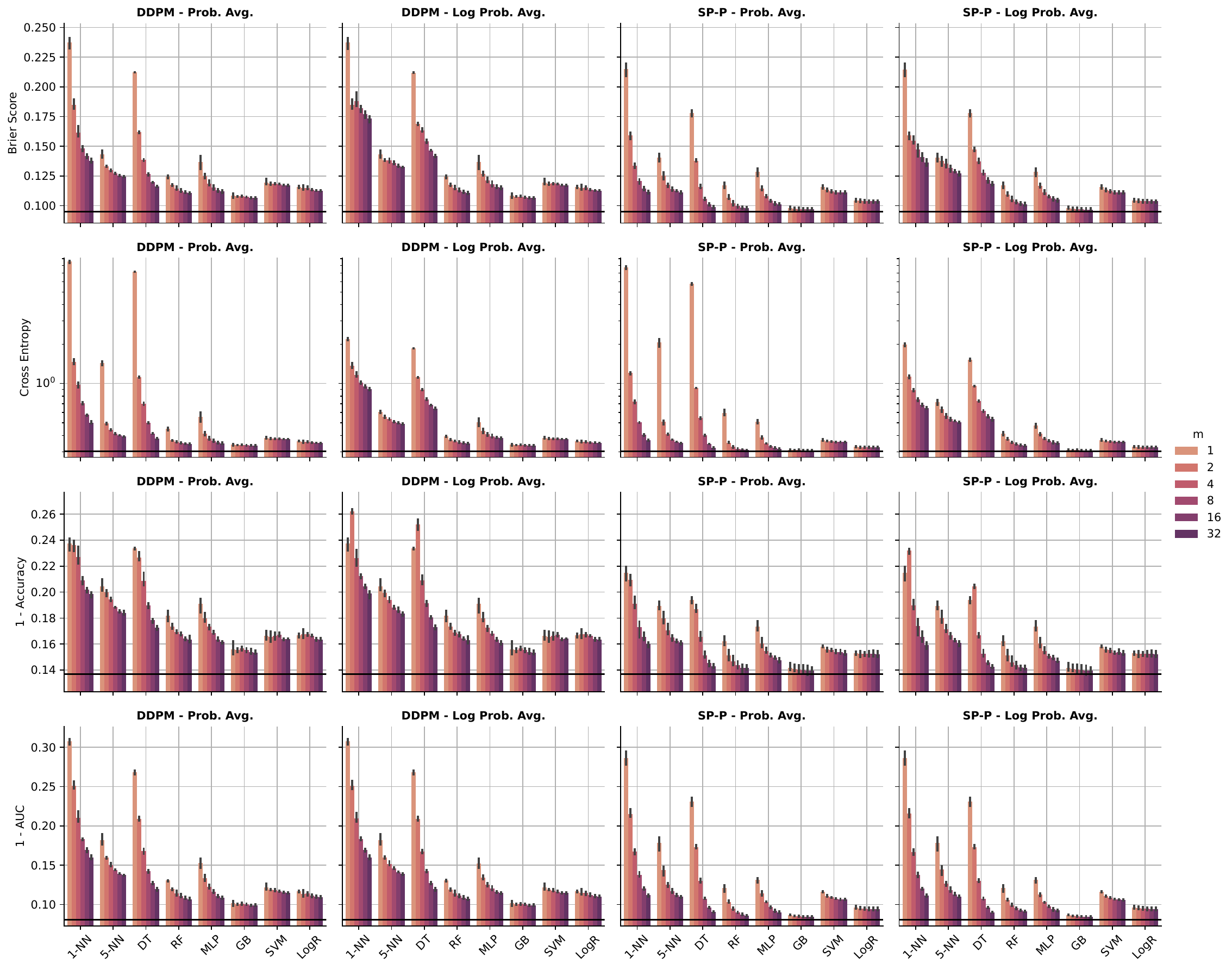}
    \caption{
        All error metrics on the Adult dataset. Note the logarithmic scale on the 
        cross entropy y-axis. The predictors are nearest neighbours 
        with 1 or 5 neighbours (1-NN and 5-NN), decision tree (DT), random forest (RF), a multilayer perceptron (MLP), 
        gradient boosted trees (GB), a support vector machine (SVM) and logistic regression (LogR).
        The black line show the loss of the best 
        downstream predictor trained on real data. The results are averaged over 3 repeats with 
        different train-test splits. The error bars are 95\% confidence intervals formed by 
        bootstrapping over the repeats.
    }
    \label{fig:adult-results}
\end{figure*}

\begin{figure*}
    \centering
    \includegraphics[width=\textwidth]{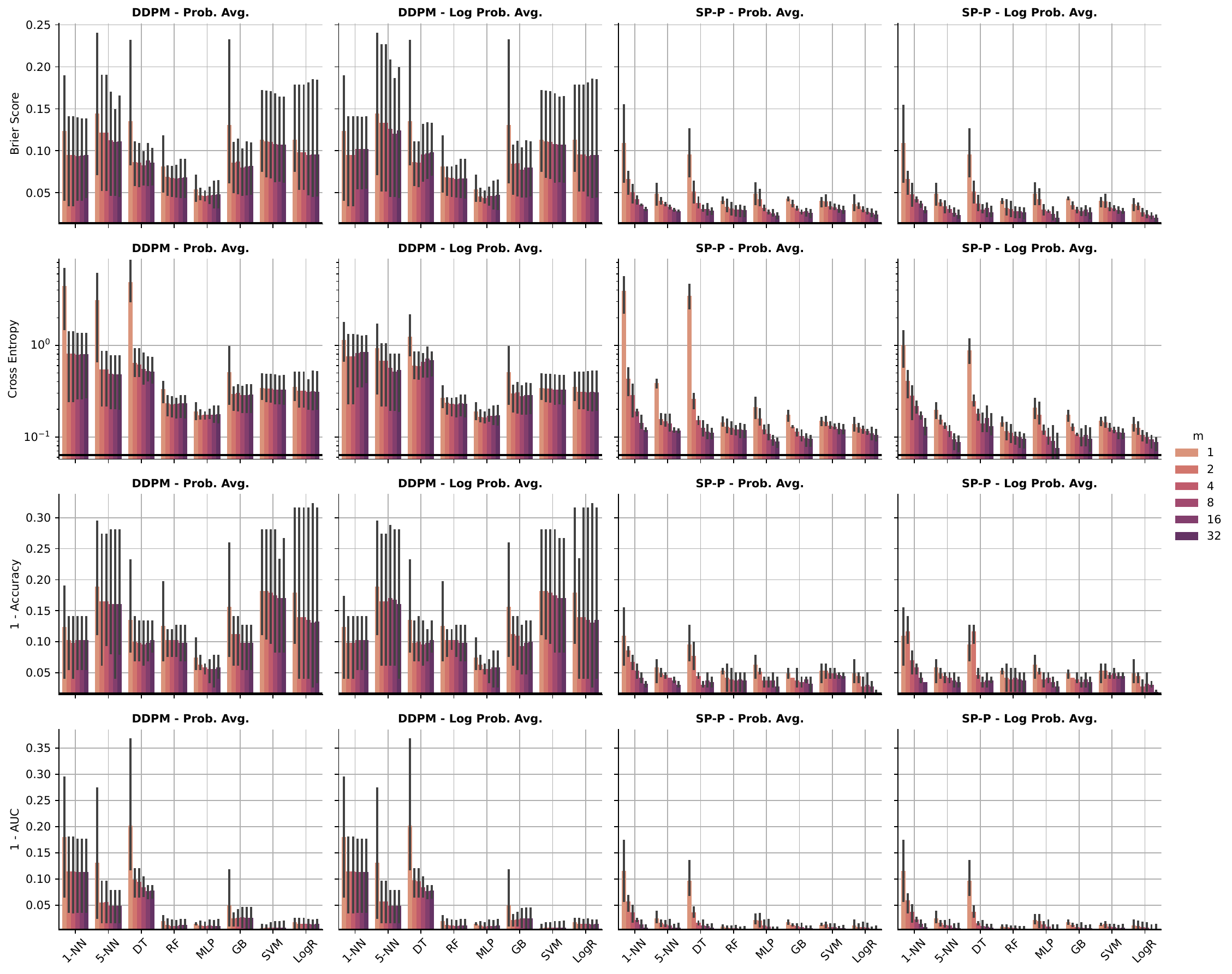}
    \caption{
        All error metrics on the breast cancer dataset. Note the he logarithmic scale on the 
        cross entropy y-axis. The predictors are nearest neighbours 
        with 1 or 5 neighbours (1-NN and 5-NN), decision tree (DT), random forest (RF), a multilayer perceptron (MLP), 
        gradient boosted trees (GB), a support vector machine (SVM) and logistic regression (LogR).
        The black line show the loss of the best 
        downstream predictor trained on real data. The results are averaged over 3 repeats with 
        different train-test splits. The error bars are 95\% confidence intervals formed by 
        bootstrapping over the repeats.
    }
    \label{fig:breast-cancer-results}
\end{figure*}

\begin{figure*}
    \centering
    \includegraphics[width=\textwidth]{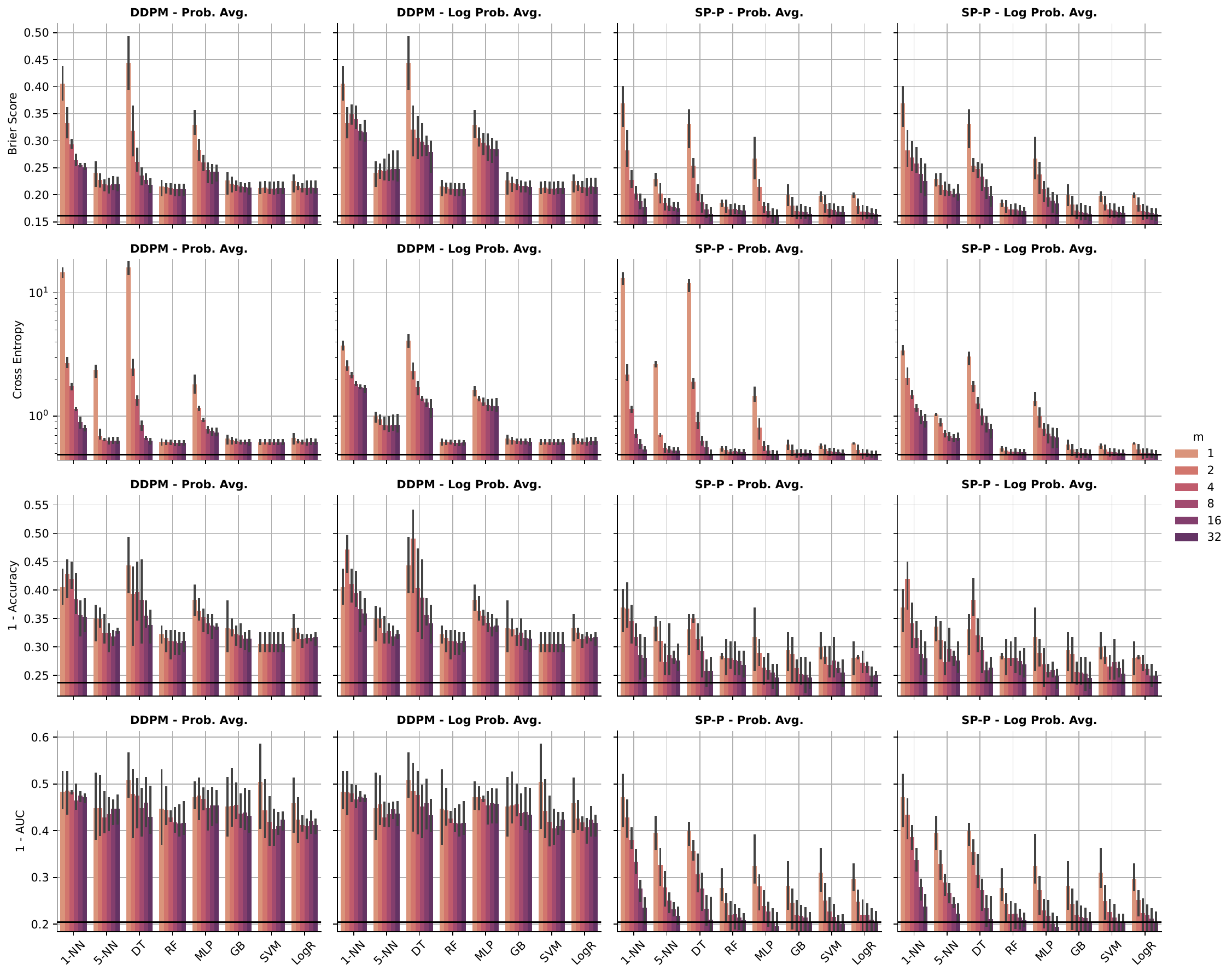}
    \caption{
        All error metrics on the German credit dataset. Note the he logarithmic scale on the 
        cross entropy y-axis. The predictors are nearest neighbours 
        with 1 or 5 neighbours (1-NN and 5-NN), decision tree (DT), random forest (RF), a multilayer perceptron (MLP), 
        gradient boosted trees (GB), a support vector machine (SVM) and logistic regression (LogR).
        The black line show the loss of the best 
        downstream predictor trained on real data. The results are averaged over 3 repeats with 
        different train-test splits. The error bars are 95\% confidence intervals formed by 
        bootstrapping over the repeats.
    }
    \label{fig:german-credit-results}
\end{figure*}

\begin{figure*}
    \begin{subfigure}{\textwidth}
        \centering
        \includegraphics[width=\textwidth]{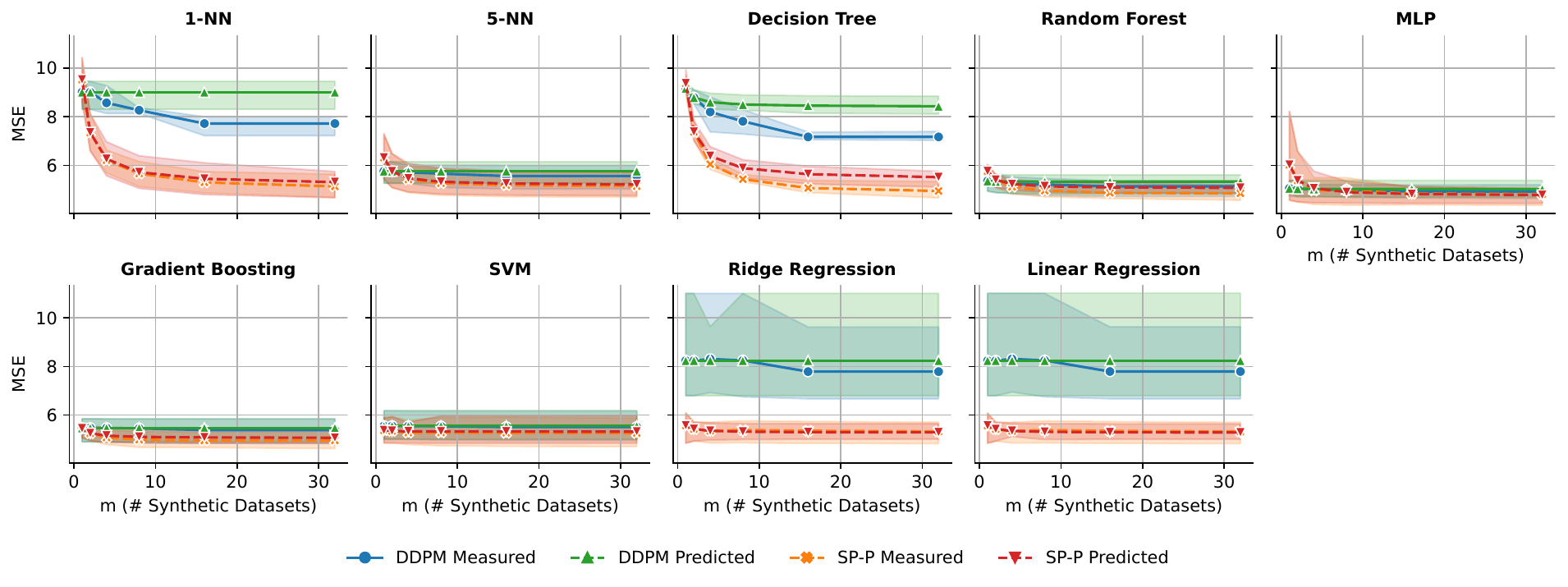}
        \caption{Abalone}
    \end{subfigure}
    \begin{subfigure}{\textwidth}
        \centering
        \includegraphics[width=\textwidth]{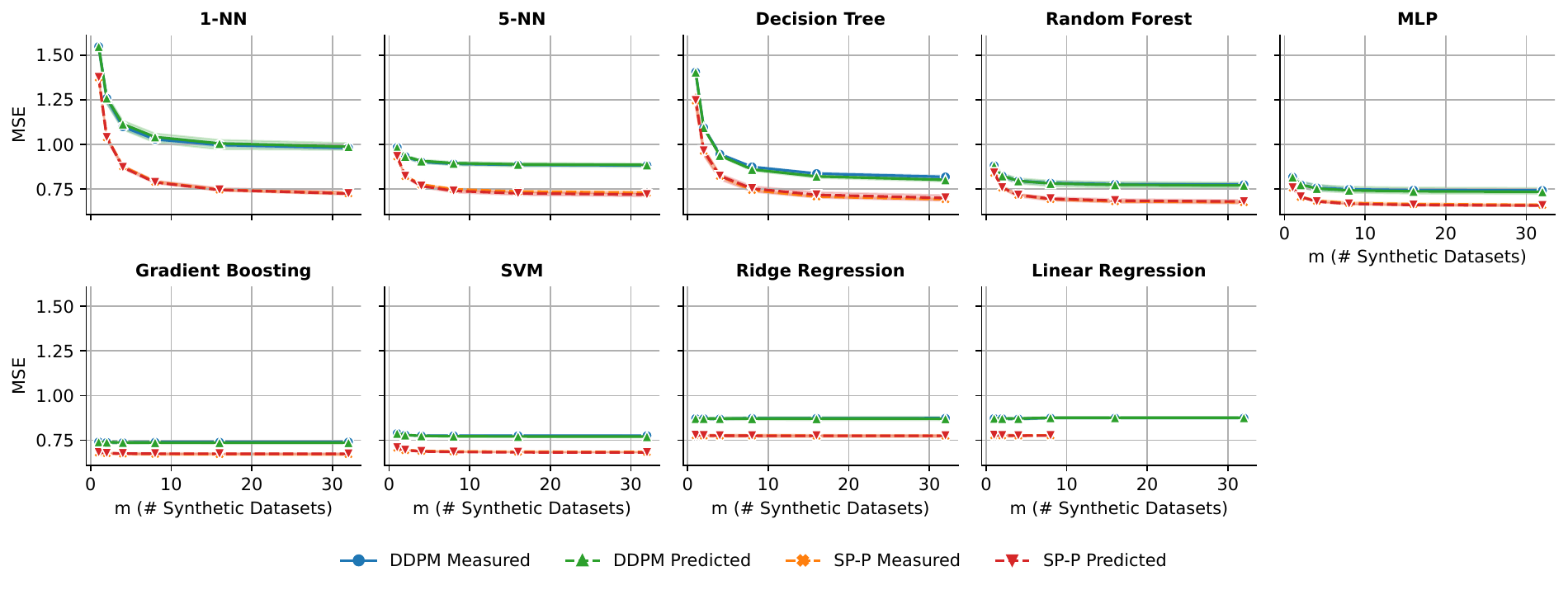}
        \caption{ACS 2018}
        \label{fig:mse-prediction-acs}
    \end{subfigure}
    \caption{
        MSE prediction on the first two regression datasets. 
        The predictions for synthpop are very accurate, and
        the predictions for DDPM are accurate for most cases.
        The linear regression measured MSE 
        line for synthpop with ACS 2018 data is cut off due to excluding repeats with extremely 
        large MSE ($\geq 10^6$).
        1-NN and 5-NN are nearest neighbours with 1 or 5 neighbours. The results are averaged over 3 
        repeats with different train-test splits. The error bands are 95\% confidence 
        intervals formed by bootstrapping over the repeats. 
    }
    \label{fig:mse-prediction-regression1}
\end{figure*}
\begin{figure*}
    \begin{subfigure}{\textwidth}
        \centering
        \includegraphics[width=\textwidth]{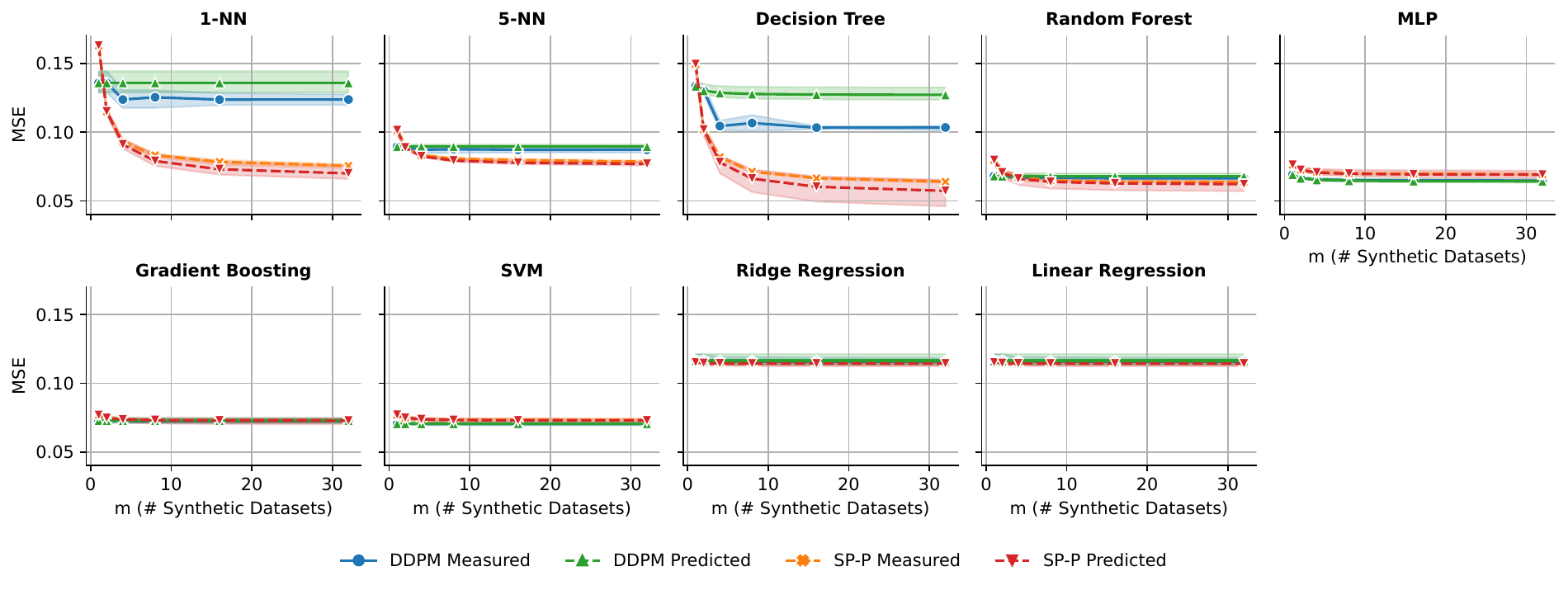}
        \caption{California Housing}
    \end{subfigure}
    \begin{subfigure}{\textwidth}
        \centering
        \includegraphics[width=\textwidth]{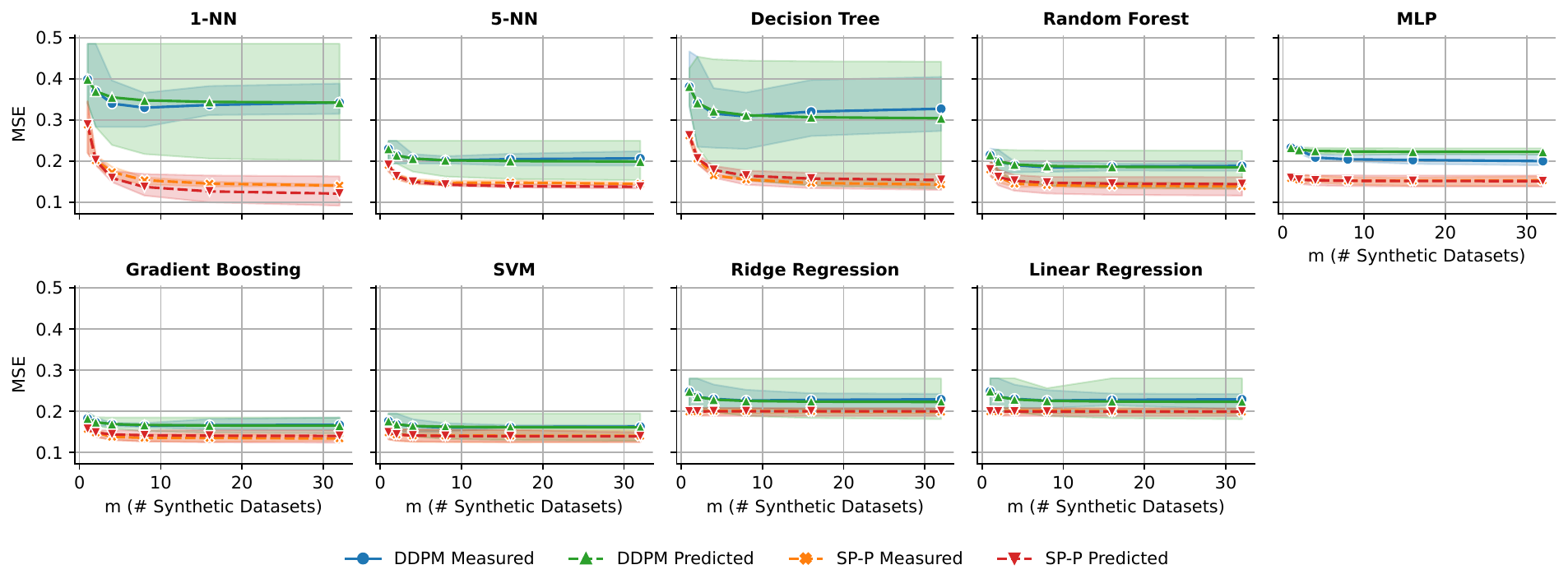}
        \caption{Insurance}
    \end{subfigure}
    \caption{
        MSE prediction on the last two regression datasets.
        The predictions for synthpop are very accurate, and
        the predictions for DDPM are accurate for most cases.
        1-NN and 5-NN are nearest neighbours with 1 or 5 neighbours. The results are averaged over 3 
        repeats with different train-test splits. The error bands are 95\% confidence 
        intervals formed by bootstrapping over the repeats. 
    }
    \label{fig:mse-prediction-regression2}
\end{figure*}

\begin{figure*}
    \begin{subfigure}{\textwidth}
        \centering
        \includegraphics[width=\textwidth]{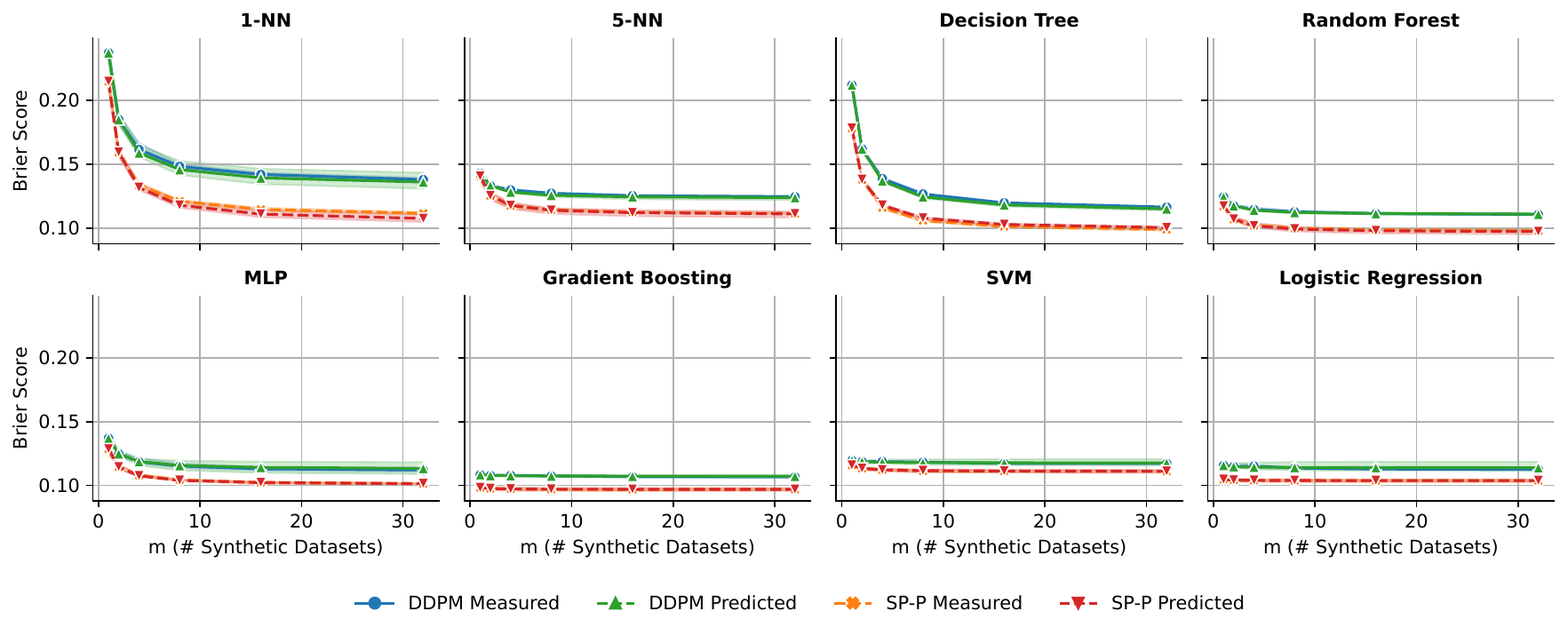}
        \caption{Adult}
    \end{subfigure}
    \begin{subfigure}{\textwidth}
        \centering
        \includegraphics[width=\textwidth]{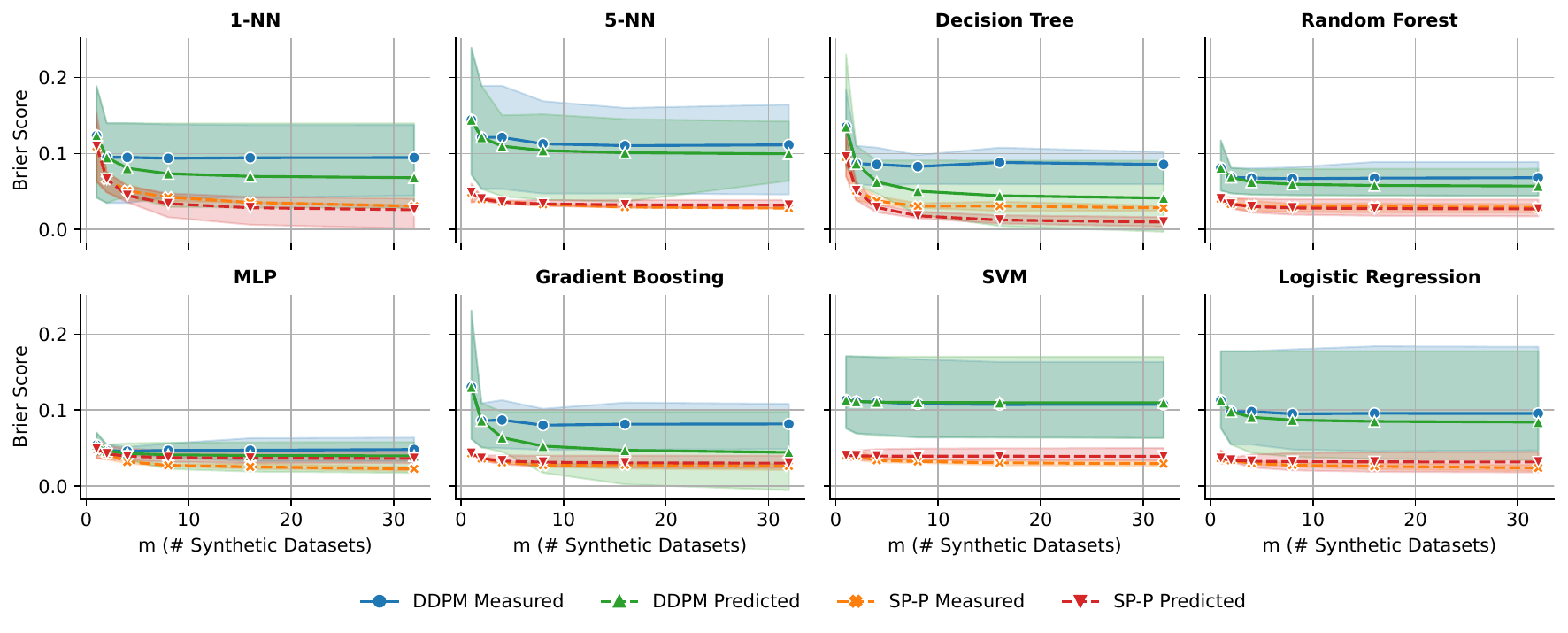}
        \caption{Breast Cancer}
    \end{subfigure}
    \begin{subfigure}{\textwidth}
        \centering
        \includegraphics[width=0.85\textwidth]{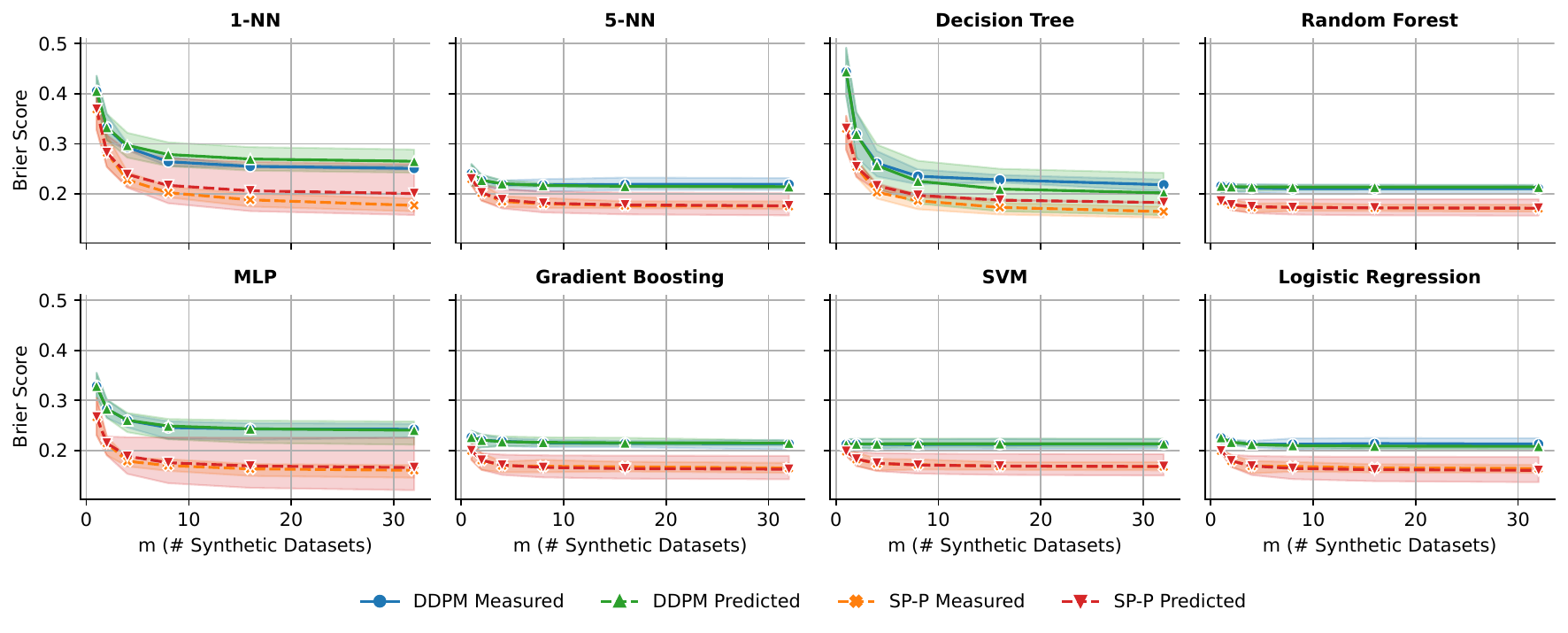}

        \vspace{-3mm}
        \caption{German credit}
    \end{subfigure}
    \caption{
        Brier score prediction on three classification datasets.
        The predictions are accurate, but can have high variance.
        1-NN and 5-NN are nearest neighbours with 1 or 5 neighbours. The results are averaged over 3 
        repeats with different train-test splits. The error bands are 95\% confidence 
        intervals formed by bootstrapping over the repeats. 
    }
    \label{fig:mse-prediction-classification1}
\end{figure*}

\begin{figure*}
    \begin{subfigure}{\textwidth}
        \centering
        \includegraphics[width=\textwidth]{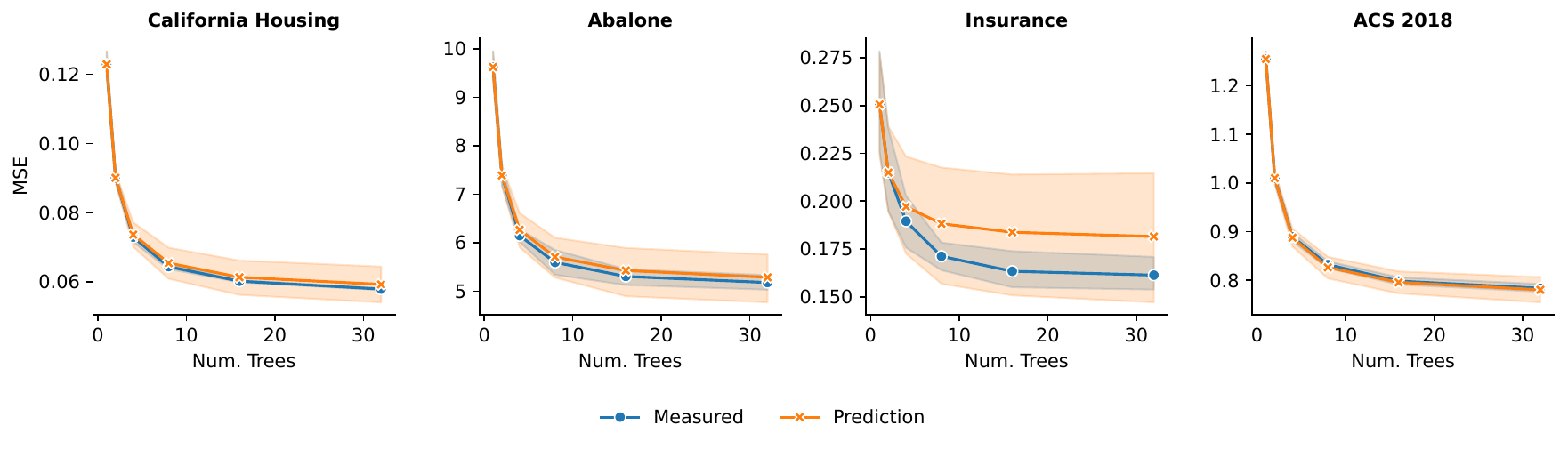}
        \caption{Regression}
        \label{fig:random-forest-mse-prediction-regression}
    \end{subfigure}
    \begin{subfigure}{\textwidth}
        \centering
        \includegraphics[width=0.8\textwidth]{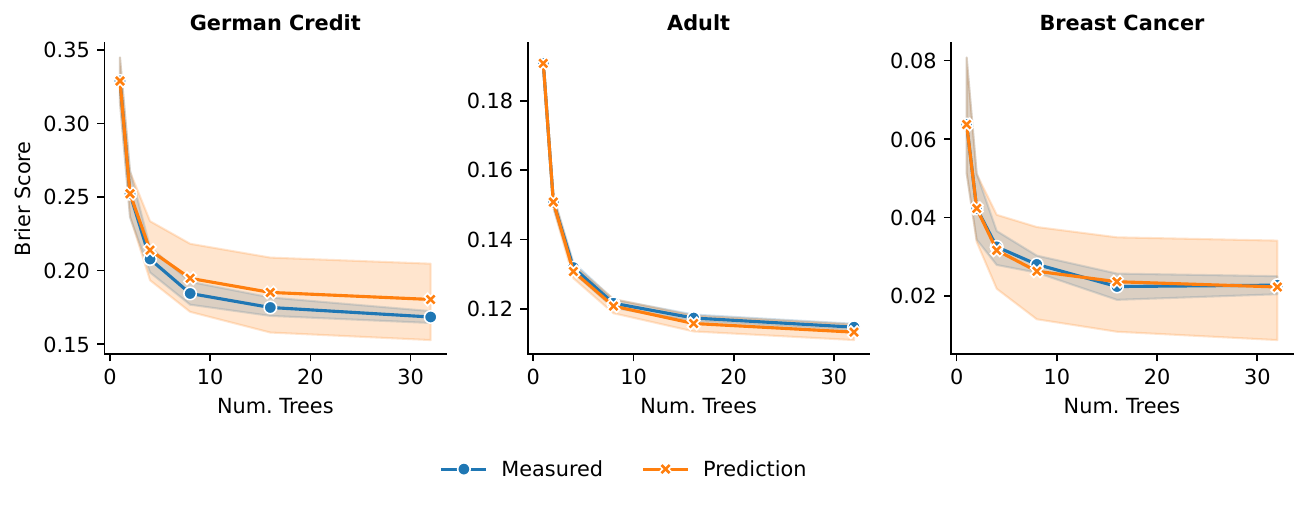}
        \caption{Classification}
        \label{fig:random-forest-mse-prediction-classification}
    \end{subfigure}
    \caption{
        Random forest performance prediction on the regression datasets in (a) and classification
        datasets in (b). The prediction is reasonably accurate on the datasets with accurate 
        estimates of the error. On the other datasets, the prediction can 
        have high variance. The lines show averages over 3 different train-test splits and 
        3 repeats of model training per split.
        The error bands are 95\% confidence intervals formed by bootsrapping over the 
        repeats and different splits.
    }
    \label{fig:random-forest-mse-prediction}
\end{figure*}

\begin{figure*}
    \centering
    \includegraphics[width=\textwidth]{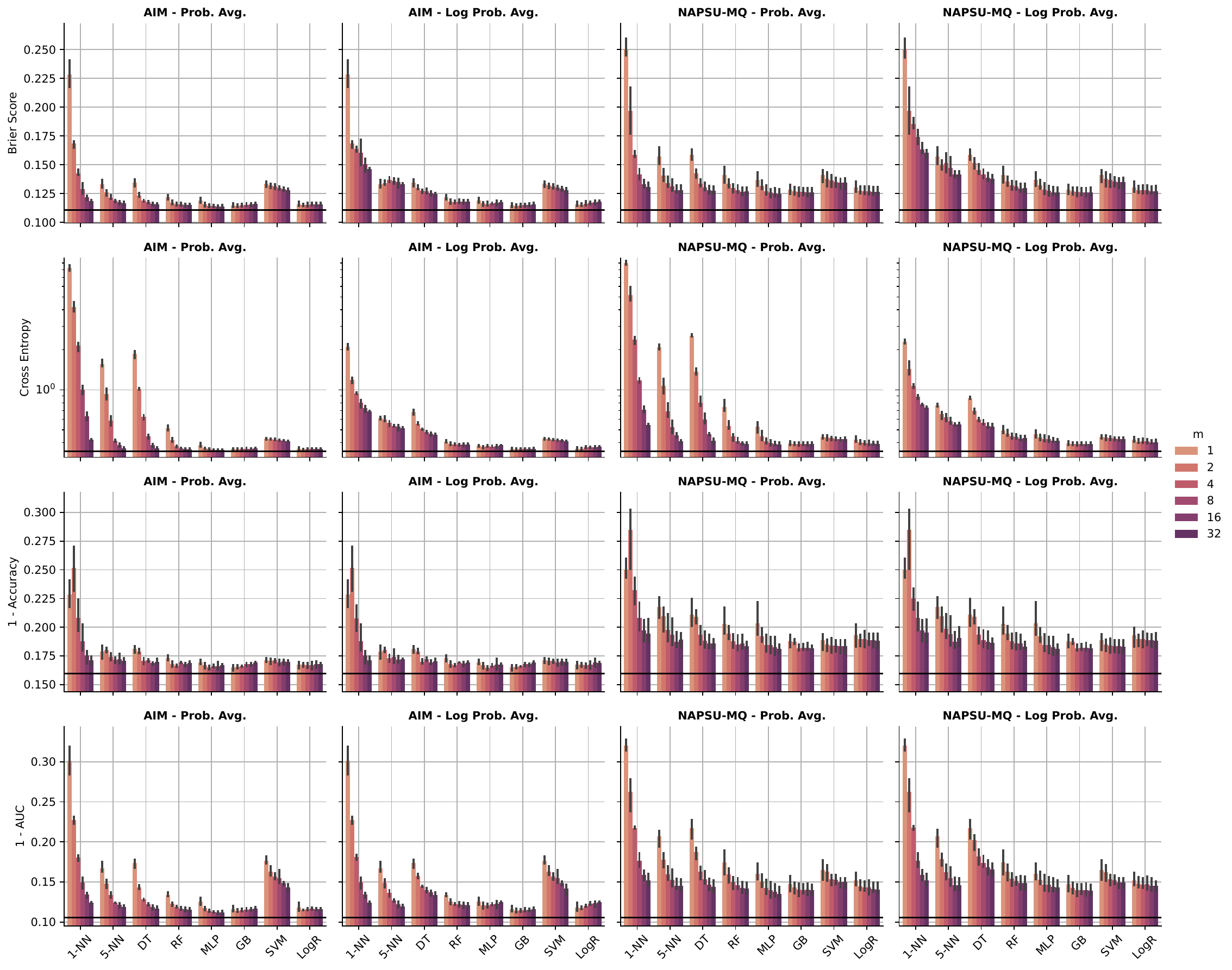}
    \caption{
        All error metrics on the Adult dataset with reduced features and DP synthetic data generation. 
        Note the he logarithmic scale on the 
        cross entropy y-axis. 
        The privacy parameters are $\epsilon = 1.5$, $\delta = n^{-2} \adultdelta$.
        The predictors are nearest neighbours 
        with 1 or 5 neighbours (1-NN and 5-NN), decision tree (DT), random forest (RF), a multilayer perceptron (MLP), 
        gradient boosted trees (GB), a support vector machine (SVM) and logistic regression (LogR).
        The black line show the loss of the best 
        downstream predictor trained on real data. The results are averaged over 3 repeats with 
        different train-test splits. The error bars are 95\% confidence intervals formed by 
        bootstrapping over the repeats.
    }
    \label{fig:dp-experiment-all-metrics}
\end{figure*}

\FloatBarrier
\newpage

\subsection{Result Tables}

\begin{table}
    \caption{
        Table of synthetic data generator comparison results from 
        Figure~\ref{fig:synthetic-data-algo-comparison}. 
        The numbers are 
        the mean MSE $\pm$ standard deviation from 3 repeats.
    }
    \label{table:synthetic-data-algo-comparison}
    \small
    \centering


\end{table}

\FloatBarrier

\subsection{Estimating Model and Synthetic Data Variances}\label{app:variance-estimation-experiment}
In this section, we estimate the MV and SDV terms from the decomposition in 
Theorem~\ref{thm:mse-synthetic-data-decomposition}. We first generate 
32 synthetic datasets that are 5 times larger than the real dataset, and split 
each synthetic datasets into 5 equally-sized subsets. This is equivalent to 
training 32 generators, with parameters $\theta_i$, and 
for each generator, generating 5 synthetic 
datasets i.i.d.\  We then train the downstream predictor on each synthetic dataset,
and store the predictions for all test points.

To estimate MV, we compute the sample variance over the 5 synthetic datasets
generated from the same $\theta_i$, and then compute the mean over the 32 different 
$\theta_i$ values. To estimate SDV, we compute the sample mean over the 5 synthetic 
datasets from the same $\theta_i$, and compute the sample variance over the 
32 different $\theta_i$ values.

The result is an estimate of MV and SDV for each test point. We plot the mean 
over the test points. The whole experiment is repeated 3 times, with different 
train-test splits. The datasets, train-test splits, and downstream predictors 
are the same as in the other experiments, described in 
Appendix~\ref{sec:experiment-details}.

The results are in Figure~\ref{fig:variance-estimation}. MV depends mostly 
on the downstream predictor, while SDV also depends on the synthetic data 
generator. We also confirm that decision trees and 1-NN have much higher variance 
than the other models, and linear, ridge and logistic regression have a 
very low variance.

\begin{figure*}
    \begin{subfigure}{0.5\textwidth}
        \centering 
        \includegraphics[width=\textwidth]{figures/variance-estimation/abalone.pdf}
        \vspace{-6mm}
        \caption{Abalone}
    \end{subfigure}
    \begin{subfigure}{0.5\textwidth}
        \centering 
        \includegraphics[width=\textwidth]{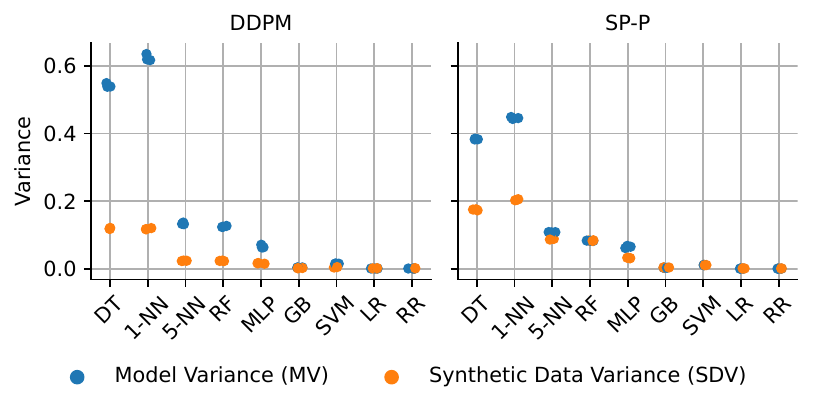}
        \vspace{-6mm}
        \caption{ACS 2018}
    \end{subfigure}

    \vspace{2mm} 
    \begin{subfigure}{0.5\textwidth}
        \centering 
        \includegraphics[width=\textwidth]{figures/variance-estimation/adult.pdf}
        \vspace{-6mm}
        \caption{Adult}
    \end{subfigure}
    \begin{subfigure}{0.5\textwidth}
        \centering 
        \includegraphics[width=\textwidth]{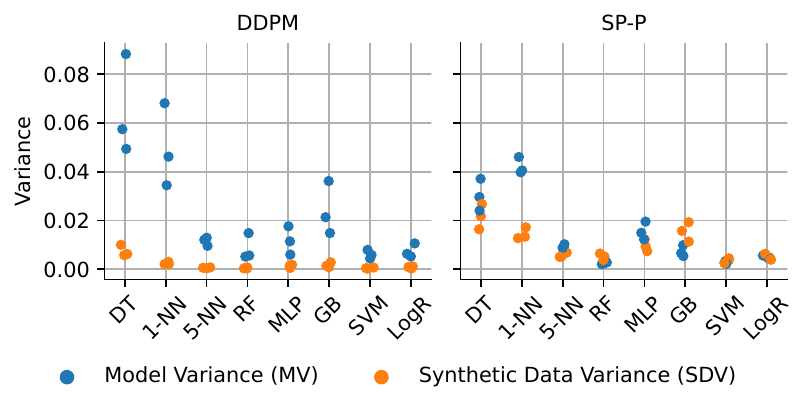}
        \vspace{-6mm}
        \caption{Breast Cancer}
    \end{subfigure}

    \vspace{2mm}
    \begin{subfigure}{0.5\textwidth}
        \centering 
        \includegraphics[width=\textwidth]{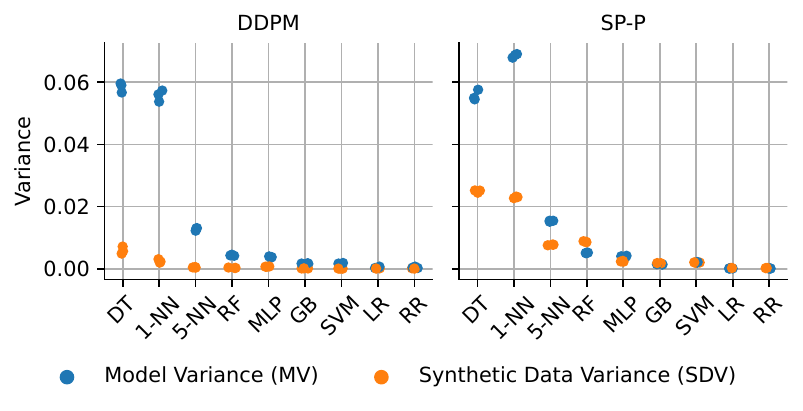}
        \vspace{-6mm}
        \caption{California Housing}
    \end{subfigure}
    \begin{subfigure}{0.5\textwidth}
        \centering 
        \includegraphics[width=\textwidth]{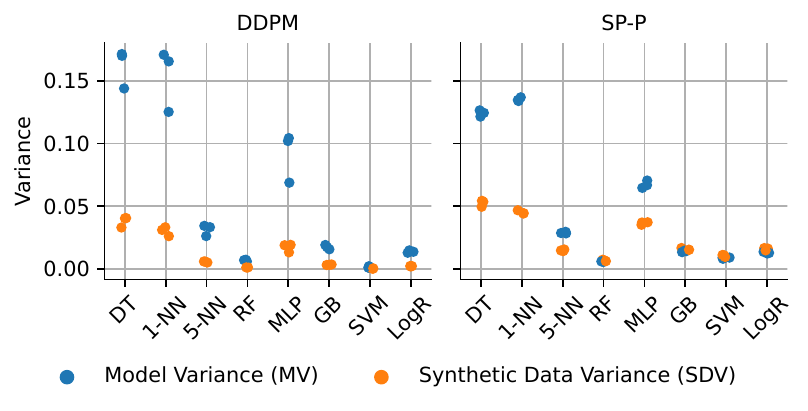}
        \vspace{-6mm}
        \caption{German Credit}
    \end{subfigure}

    \vspace{2mm}
    \begin{subfigure}{1.0\textwidth}
        \centering 
        \includegraphics[width=0.5\textwidth]{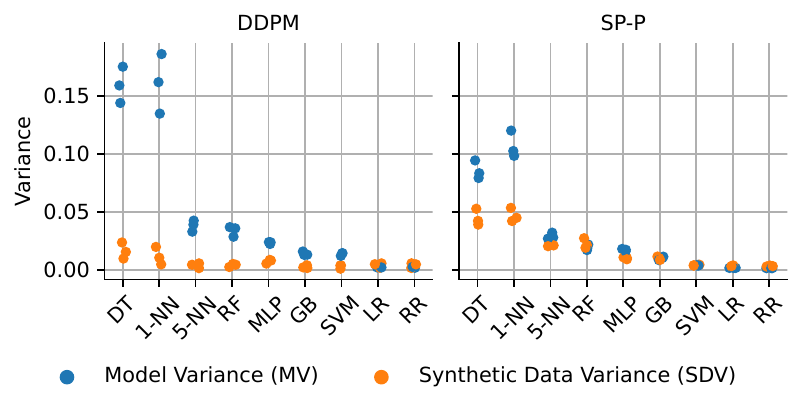}
        \vspace{-2mm}
        \caption{Insurance}
    \end{subfigure}
    \caption{
        Estimating the MV and SDV terms from the decomposition. Decision trees 
        have high variances on all datasets, while linear, ridge and logistic regression
        have low variances. MV depends mostly on the predictor, while SDV depends 
        on both the predictor and synthetic data generation algorithm. The 
        points are the averages of estimated MV and SDV,
        averaged over the test data, from 3 repeats with different train-test splits.
        We excluded some test data points that had extremely large variance estimates
        ($\geq 10^6$) for linear regression on the ACS 2018 dataset.
        The predictors are decision tree (DT), nearest neighbours with 1 or 5 neighbours (1-NN and 5-NN), 
        random forest (RF), a multilayer perceptron (MLP), 
        gradient boosted trees (GB), a support vector machine (SVM), 
        linear regression (LR), ridge regression (RR) and logistic regression (LogR).
        The synthetic data generators are DDPM and synthpop (SP-P).
    }
    \label{fig:variance-estimation}
\end{figure*}

\subsection{Comparison with Generating A Single Large Synthetic Dataset}\label{app:one-large-synthetic-dataset}
In this section, we examine an alternative to multiple synthetic datasets:
generating a single, large synthetic dataset. 
\citet{vanbreugelSyntheticDataReal2023} found this lead to poor model
evaluation and selection, with a single synthetic dataset leading to 
overestimating the performance of complex models. They did not 
directly compare the effect on accuracy, which is what we will do 
in this section.

We use the synthetic datasets from the variance estimation experiment\footnote{We use 
only one of the 32 synthetic datasets per repeat.} of 
Section~\ref{app:variance-estimation-experiment} that are 5 times larger than the real 
dataset. We also take smaller
subsets of the whole synthetic dataset to examine other synthetic 
dataset sizes. We train the same models as in the other experiments,
but only on regression datasets generated by synthpop (proper).

We compare the results from the single large synthetic dataset with 
the ensemble of multiple synthetic dataset with an equal number of 
total datapoints in Figure~\ref{fig:one-large-results-regression}. We 
see that the ensemble of multiple synthetic datasets is equal or better 
in all cases in terms of MSE. 

\begin{figure*}
    \begin{subfigure}{1.0\textwidth}
        \centering 
        \includegraphics[width=\textwidth]{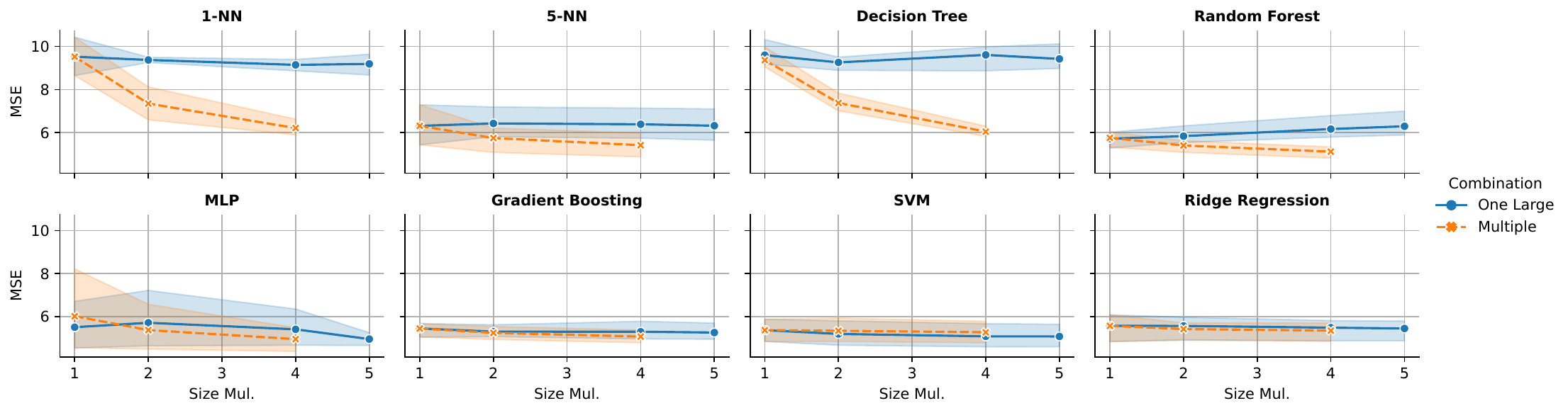}
        \caption{Abalone}
        \label{fig:one-large-regression-abalone}
    \end{subfigure}
    \begin{subfigure}{1.0\textwidth}
        \centering 
        \includegraphics[width=\textwidth]{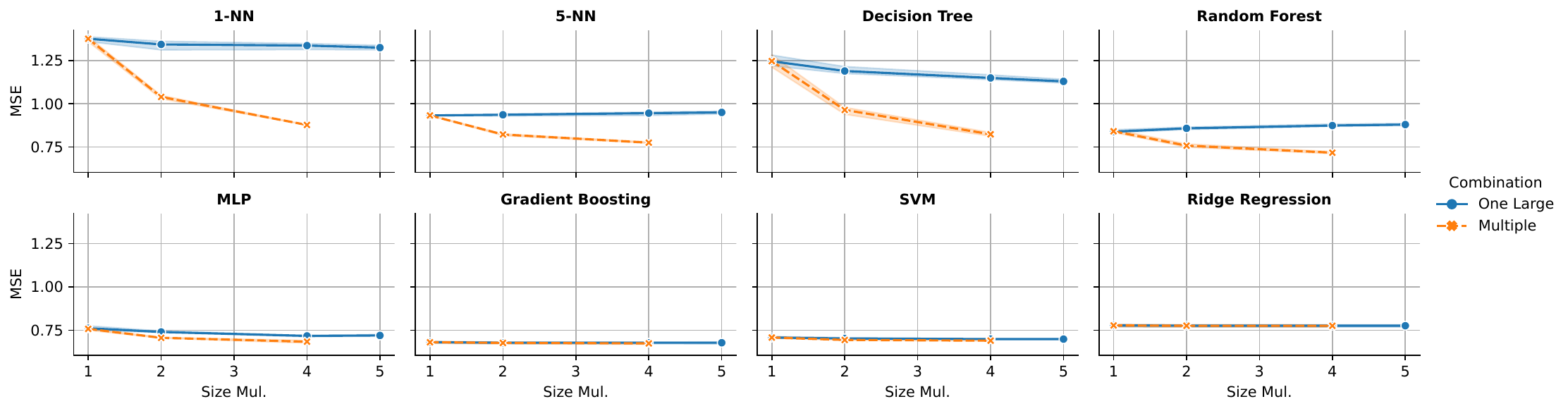}
        \caption{ACS 2018}
        \label{fig:one-large-regression-acs2018}
    \end{subfigure}
    \begin{subfigure}{1.0\textwidth}
        \centering 
        \includegraphics[width=\textwidth]{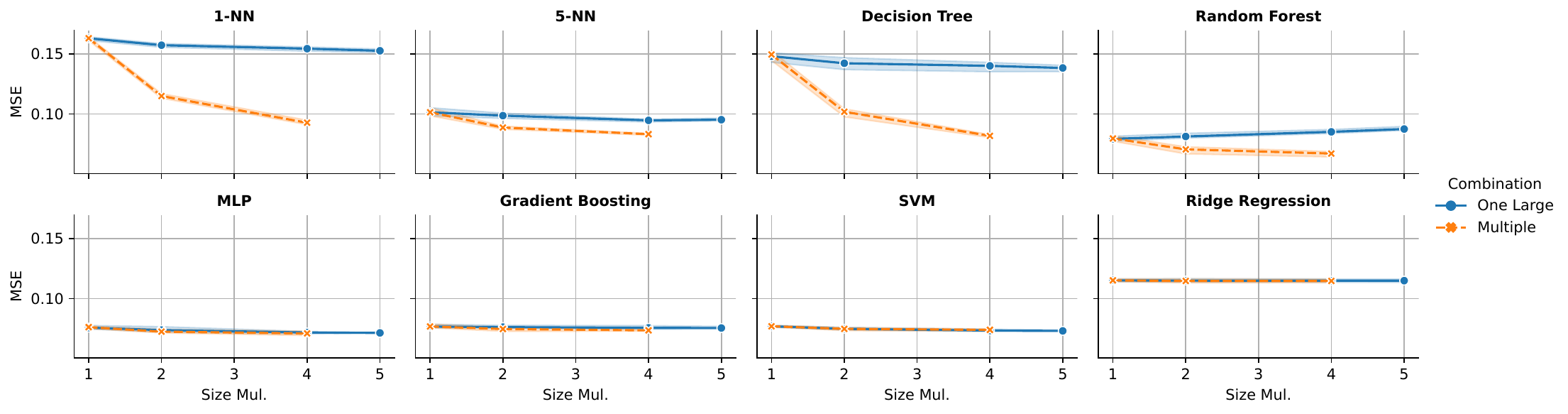}
        \caption{California Housing}
        \label{fig:one-large-regression-california-housing}
    \end{subfigure}
    \begin{subfigure}{1.0\textwidth}
        \centering 
        \includegraphics[width=\textwidth]{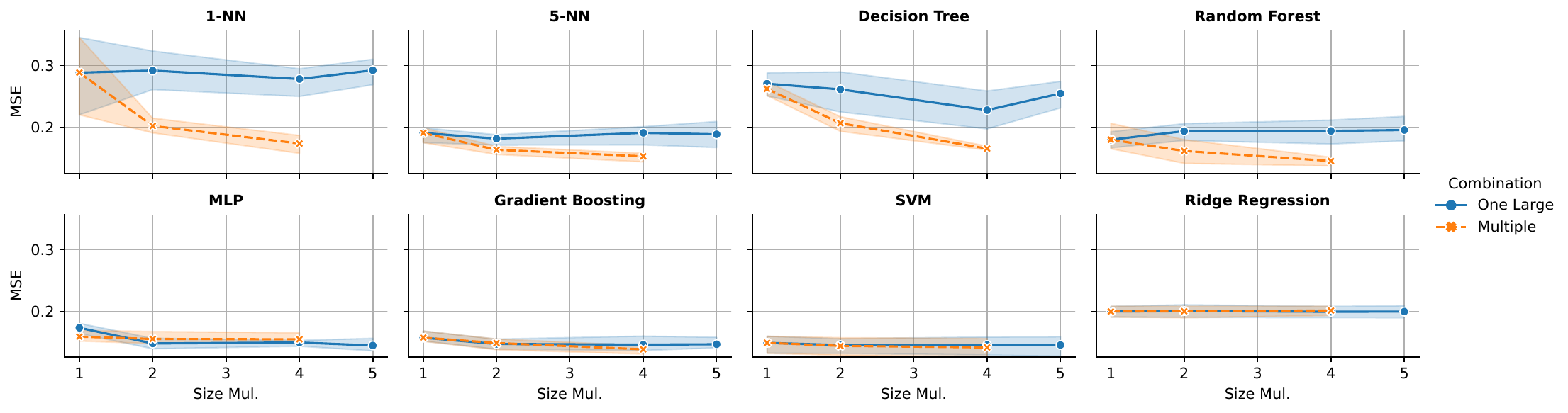}
        \caption{Insurance}
        \label{fig:one-large-regression-insurance}
    \end{subfigure}
    \caption{
        Comparison of the ensemble of multiple synthetic datasets with one large 
        synthetic dataset with an equal number of synthetic datapoints on the 
        regression datasets with synthpop. We see that 
        multiple synthetic datasets are always equal or better. Size Mul. 
        is the relative total number of synthetic data points 
        to the real data: for multiple synthetic datasets, it is $m$, and 
        for a single large synthetic dataset it is $n_{Syn} / n_{Real}$.
    }
    \label{fig:one-large-results-regression}
\end{figure*}

\end{document}